\newtheorem{definition}{Definition}
\newtheorem{theorem}{Theorem}
\newtheorem{proposition}{Proposition} 
\newtheorem{example}{Example}
\newcommand{\tbx}{\textbf{x}}
\newcommand{\tbs}{\textbf{s}}
\definecolor{darkred}{rgb}{0.8, 0., 0.}
\title{A Near-Optimal Algorithm for Debiasing Trained Machine Learning Models}
\author{%
  Ibrahim~Alabdulmohsin \\
  Google Research, Brain Team\\
  Z\"urich, Switzerland \\
  \texttt{ibomohsin@google.com} \\
   \And
   Mario Lucic \\
   Google Research, Brain Team \\
   Z\"urich, Switzerland \\
   \texttt{lucic@google.com} \\

}
\begin{document}

\maketitle

\begin{abstract}
We present a scalable post-processing algorithm for debiasing trained models, including deep neural networks (DNNs), which we prove to be near-optimal by bounding its excess Bayes risk.  We empirically validate its advantages on standard benchmark datasets across both classical algorithms as well as modern DNN architectures and demonstrate that it outperforms previous post-processing methods while performing on par with in-processing. In addition, we show that the proposed algorithm is particularly effective for models trained at scale where post-processing is a natural and practical choice.
\end{abstract}

\section{Introduction}
\paragraph{Background.}
Machine learning is increasingly applied to critical decisions which can have a lasting impact on individual lives, such as for credit lending \citep{bruckner2018promise}, medical applications \citep{deo2015machine}, and criminal justice \citep{brennan2009evaluating}. Consequently, it is imperative to understand and improve the degree of bias of such automated decision-making. 

Unfortunately, despite the fact that bias (or \lq\lq fairness") is a central concept in our society today, it is difficult to define it in precise terms. In fact, as people perceive ethical matters differently depending on a plethora of factors including geographical location or culture \citep{awad2018moral}, no universally-agreed upon definition for bias exists. Moreover, bias may depend on the application and might even be ignored in favor of accuracy when the stakes are high, such as in medical diagnosis \citep{kleinberg2016inherent}. As such, it is not surprising that several measures of bias have been introduced, such as statistical parity \citep{dwork2012fairness,zafar2017fairness}, equality of opportunity \citep{hardt2016equality}, and equalized odds \citep{hardt2016equality,kleinberg2016inherent}, and these are not generally mutually compatible \citep{chouldechova2017fair,kleinberg2016inherent}.

Let $\mathcal{X}$ be an instance space and let $\mathcal{Y}=\{0,1\}$ be the target set in a  binary classification problem. In the fair classification setting, we may further assume the existence of a sensitive attribute $s:\mathcal{X}\to\{1,\ldots,K\}$, where $s(x)=k$ if and only if $x\in X_k$ for some total partition $\mathcal{X}=\cup_k X_k$. For example, $\mathcal{X}$ might correspond to the set of job applicants while $s$ indicates their sex. Then, a commonly used criterion for fairness is to require similar mean outcomes across the sensitive attribute (a.k.a. statistical parity)   \citep{dwork2012fairness,zafar2017fairness,mehrabi2019survey}:

\begin{definition}[Statistical Parity]\label{def::conditional_convar}
Let $\mathcal{X}$ be an instance space and $\mathcal{X}=\cup_k X_k$ be a total partition of $\mathcal{X}$. A predictor $h:\mathcal{X}\to[0,\,1]$ satisfies $\epsilon$ statistical parity across all groups $X_1,\ldots,X_K$ if:
\begin{equation*}
    \max_{k\in[K]} \mathbb{E}_{\textbf{x}}[h(\textbf{x})\,|\,\textbf{x}\in X_k] \\-\; \min_{k\in[K]} \mathbb{E}_{\textbf{x}}[h(\textbf{x})\,|\,\textbf{x}\in X_k]\le \epsilon,
\end{equation*}
where $[K]$ denotes the set $\{1, \ldots, K\}$.
\end{definition}

Our main contribution is to derive a near-optimal recipe for debiasing models, including deep neural networks, according to Definition \ref{def::conditional_convar}. Specifically, we formulate the task of debiasing learned models as a regularized optimization problem that is solved efficiently using the projected SGD method. We show how the algorithm produces thresholding rules with randomization near the thresholds, where the width of randomization is controlled by a regularization hyperparamter. We also prove that randomization near the threshold is, in general, necessary for Bayes risk consistency. In Appendix \ref{sect::appendix_non_binary_s}, we show how the proposed algorithm can be modified to handle a weaker notion of bias as well. We refer to the proposed algorithm as Randomized Threshold Optimizer (RTO).

Besides the theoretical guarantees, we empirically validate RTO on benchmark datasets across both classical algorithms as well as modern DNN architectures. Our experiments demonstrate that the proposed algorithm significantly outperforms previous post-processing methods and performs competitively with in-processing (Section \ref{sect::experiments}). While we focus on binary sensitive attributes in the experiments, our algorithm and its guarantees continue to hold for non-binary attributes as well.

In addition, we show that RTO is particularly effective for models trained at scale where post-processing is a natural and practical choice. Qualitatively speaking, for a fixed downstream task $D$, such as predicting facial attributes in the CelebA dataset \citep{liu2015faceattributes}, we say that the model is \lq\lq trained at scale" if it is both: (1) heavily overparameterized, and (2) pretrained on large datasets before fine-tuning on the downstream task $D$. We show that the impact of debiasing models on their performance using the proposed algorithm can be improved with scale.

\paragraph{Remark.}Because \lq\lq bias" is a societal concept that cannot be reduced to metrics such as statistical parity \citep{chouldechova2017fair,dixon2018measuring,selbst2019fairness}, our conclusions do not necessarily pertain to \lq\lq fairness" in its broader sense. Rather, they hold for the narrow technical definition of statistical parity. Similarly, we conduct experiments on standard benchmark datasets, such as CelebA \citep{liu2015faceattributes} and COCO \citep{DBLP:journals/corr/LinMBHPRDZ14}, which are commonly used in the literature, as a way of validating the technical claims of this paper. Our experiments are, hence, not to be interpreted as an endorsement of those visions tasks, such as predicting facial attributes.

\section{Related Work}
Algorithms for fair machine learning can be broadly classified into three groups: (1) pre-processing methods, (2) in-processing methods, and (3) post-processing methods \citep{JMLR:v20:18-262}.

Preprocessing algorithms transform the data into a different representation such that any classifier trained on it will not exhibit bias. This includes methods for learning a fair representation  \citep{pmlr-v28-zemel13,lum2016statistical,bolukbasi2016man,calmon2017optimized,pmlr-v80-madras18a,kamiran2012data}, label manipulation \citep{kamiran2009classifying}, data augmentation \citep{dixon2018measuring}, or disentanglement  \citep{locatello2019fairness}. 

On the other hand, in-processing methods constrain the behavior of learning algorithms in order to control bias. This includes methods based on adversarial training \citep{zhang2018mitigating} and constraint-based classification, such as by incorporating constraints on the decision margin \citep{JMLR:v20:18-262} or features \citep{grgic2018beyond}. \cite{pmlr-v80-agarwal18a} showed that the task of learning an unbiased classifier could be reduced to a \emph{sequence} of cost-sensitive classification problems, which could be applied to any {black-box} classifier. One caveat of the latter approach is that it requires solving a linear program (LP) and retraining classifiers, such as neural networks, \emph{many} times before convergence.

The algorithm we propose in this paper is a post-processing method, which can be justified theoretically~\citep{corbett2017algorithmic,hardt2016equality,menon2018cost,celis2019classification}.  \cite{fish2016confidence} and \cite{woodworth2017learning} fall under this category. However, the former only provides generalization guarantees without consistency results while the latter proposes a two-stage approach that requires changes to the original training algorithm. \citet{kamiran2012decision} also proposes a post-processing algorithm, called Reject Option Classifier (ROC), without any theoretical guarantees. In contrast, our algorithm is Bayes consistent and does not alter the original classification method. In \cite{celis2019classification} and \cite{menon2018cost}, instance-dependent thresholding rules are also learned. However, our algorithm also learns to \emph{randomize}  around the threshold (Figure \ref{fig:fairness_Q}(a)) and this randomization is \emph{key} to our algorithm both theoretically as well as experimentally (Appendix \ref{appendix:proof_excess_risk} and Section \ref{sect::experiments}). \citet{hardt2016equality} learns a randomized post-processing rule but our proposed algorithm outperforms it in all of our experiments (Section \ref{sect::experiments}). Also, \citep{wei2019optimized} is a post-processing method but it requires solving a non-linear optimization problem (for the dual variables) via ADMM and provides guarantees for approximate fairness only.

\citet{woodworth2017learning} showed that the post-processing approach can be suboptimal. Nevertheless, the latter result does not contradict the statement that our post-processing rule is near-optimal because we assume that the original classifier outputs a score (i.e. a monotone transformation of an approximation to the posterior $p(\textbf{y}=1\,|\, \tbx)$ such as margin or softmax output) whereas~\cite{woodworth2017learning} assumed that the post-processing rule had access to the binary predictions only.

We argue that the proposed algorithm has distinct advantages, particularly for deep neural networks (DNNs). First, stochastic convex optimization methods can scale well to massive amounts of data \citep{bottou2010large}, which is often the case in deep learning today. Second, the guarantees provided by our algorithm hold w.r.t.\ the \emph{binary} predictions instead of using a proxy, such as the margin as in some previous works \citep{pmlr-v54-zafar17a,JMLR:v20:18-262}. Third, unlike previous reduction methods that would require retraining a deep neural network several times until convergence \citep{pmlr-v80-agarwal18a}, which can be prohibitively expensive, our algorithm does not require retraining. Also, post-processing can be  the \emph{only} available option, such as when using machine learning as a service with out-of-the-box predictive models or due to various other constrains in data and computation \citep{yang2020towards2}.

\section{Near-Optimal Algorithm for Statistical Parity} \label{sect::fair_via_unconstrainted}

\textbf{Notation.}\quad We reserve boldface letters for random variables (e.g.\ $\tbx$), small letters for instances (e.g.\ $x$), capital letters for sets (e.g.\ $X$), and calligraphic typeface for universal sets (e.g.\ the instance space $\mathcal{X}$). Given a set $S$, $1_S(x)\in\{0,\,1\}$ is its characteristic function. Also, we denote $[N]=\{1,\ldots,N\}$ and $[x]^+=\max\{0,\,x\}$. We reserve $\eta(x)$ for the Bayes regressor: $\eta(x)=p(\textbf{y}=1\,|\,\textbf{x}=x)$. 

\paragraph{Algorithm.} Given a classifier outputting a probability score $\hat p(\textbf{y}=1\,|\,\textbf{x}=x)$, let $f(x) = 2\hat p(\textbf{y}=1\,|\,\textbf{x}=x) -1$. We refer to $f(x) \in [-1,\,+1]$ as the classifier's output. Our goal is to post-process the predictions made by the classifier to control statistical parity with respect to a sensitive attribute $s:\mathcal{X}\to[K]$ according to Definition~\ref{def::conditional_convar}. To this end, instead of learning a deterministic rule, we consider \emph{randomized} prediction rules $h(\tbx)$, where $h(\tbx)$ is the probability of predicting the positive class given $f({\tbx})$ and $s({\tbx})$. Note that we have the Markov chain: $\tbx\to(s(\tbx),\,f(\tbx))\to h(\tbx)$. 

A simple approach of achieving $\epsilon$ statistical parity is to output a constant prediction in each subpopulation, which is clearly suboptimal in general. As such, there is a  tradeoff between accuracy and fairness. 
The approach we take in this work is to modify the original classifier such that the original predictions are matched as much as possible while satisfying the fairness constraints. Minimizing the probability of altering the binary predictions of the original classifier can be achieved by maximizing the inner product $\mathbb{E}_{\textbf{x}}[ h(\textbf{x})\cdot f(\textbf{x})]$ (cf. Appendix~\ref{appendix:proof_excess_risk}). However, maximizing this objective alone leads to \emph{deterministic} thresholding rules which have a major drawback as illustrated in the following example.

\begin{example}[Randomization is necessary]\label{example::randomization}
Suppose that $\mathcal{X}=\{-1, 0, 1\}$ where $p(\tbx=-1)=1/2$, $p(\tbx=0)=1/3$ and $p(\tbx=1)=1/6$. Let $\eta(-1)=0, \,\eta(0)=1/2$ and $\eta(1)=1$. In addition, let $\tbs\in\{0,1\}$ be a sensitive attribute, where $p(\tbs=1|\tbx=-1)=1/2$, $p(\tbs=1|\tbx=0)=1$, and $p(\tbs=1|\tbx=1)=0$. Then, the Bayes optimal prediction rule $h^\star(x)$ subject to statistical parity ($\epsilon=0$) satisfies: $p(h^\star(\tbx)=1|\tbx=-1)=0$, $p(h^\star(\tbx)=1|\tbx=0)=7/10$ and $p(h^\star(\tbx)=1|\tbx=1)=1$.  
\end{example}

\begin{algorithm}[t]
 \small
 \vspace{1mm}
 \textbf{Input:} $\gamma>0;\rho\in[0,\,1];\epsilon\ge 0; \;f:\mathcal{X}\to[-1,1]; s:\mathcal{X}\to[K]$
 
 \vspace{1mm}
 
 \textbf{Output:}
 Prediction rule: $h_\gamma(x)$
 
 \vspace{1mm}
 \textbf{Training:} Initialize $(\lambda_1,\mu_1),\ldots,(\lambda_K,\mu_K)$ to zeros. Then, repeat until convergence:\vspace{2mm}
 \begin{enumerate}[leftmargin=8mm,topsep=0pt,]
     \item Sample an instance $\textbf{x}\sim p(x)$
     \item Perform the updates:
\begin{equation}\label{eq:update_rules}
    \lambda_{s(\tbx)} \leftarrow [\lambda_{s(\tbx)}-\eta g_{\lambda_{s(\tbx)}}]^+, \quad\mu_{s(\tbx)} \leftarrow [\mu_{s(\tbx)}-\eta g_{\mu_{s(\tbx)}}]^+
\end{equation}
where:
\begin{equation*}
  \begin{aligned}
  g_{\lambda_{s(\tbx)}}&=\frac{\epsilon}{2}+\rho +\frac{\partial }{\partial  \lambda_{s(\tbx)}}\xi_\gamma\big(f(\tbx)-(\lambda_{s(\tbx)}-\mu_{s(\tbx)})\big)\\
   g_{\mu_{s(\tbx)}}&=\frac{\epsilon}{2}-\rho +\frac{\partial }{\partial  \mu_{s(\tbx)}}\xi_\gamma\big(f(\tbx)-(\lambda_{s(\tbx)}-\mu_{s(\tbx)})\big).
\end{aligned}
\end{equation*}
and: 
\begin{equation}\label{eq::xi_funct}
    \xi_\gamma(w) = \frac{w^2}{2\gamma}\cdot \mathbb{I}\{0\le w\le \gamma\} \;+\;\big(w-\frac{\gamma}{2}\big)\cdot \mathbb{I}\{w> \gamma\}
\end{equation}
     
 \end{enumerate}\vspace{3mm}
 {\bf Prediction:} Given an instance $x$ in the group $X_k$, predict the label $+1$ with probability $h_\gamma(x)$, where:
  \begin{equation}\label{eq:prediction_rule}
h_\gamma(x) =  \big[\min\{1,\; (f(x)-\lambda_k+\mu_k)/\gamma\big]^+
 \end{equation}
 \caption{Pseudocode of the Randomized Threshold Optimizer (RTO).}\label{algorithm}
\end{algorithm}

\looseness=-1As a result, randomization close to the threshold is \emph{necessary} in the general case to achieve Bayes risk consistency. This conclusion in Example \ref{example::randomization} remains true with approximate fairness ($\epsilon<12/70$). In this work, we propose to achieve this by minimizing the following \emph{regularized} objective for some hyperparameter $\gamma>0$:
\begin{equation}
    (\gamma/2)\,\mathbb{E}_{\textbf{x}}[ h(\textbf{x})^2] \,-\, \mathbb{E}_{\textbf{x}}[ h(\textbf{x})\cdot f(\textbf{x})].
\end{equation} 
We prove in Appendix \ref{appendix:proof_correctness} that this regularization term leads to randomization around the threshold, which is critical, both theoretically (Section~\ref{sect::analysis} and Appendix \ref{appendix:proof_excess_risk}) and experimentally (Section~\ref{sect::experiments}). Informally, $\gamma$ controls the width of randomization as illustrated in Figure~\ref{fig:fairness_Q}.

 Let $\mathcal{X}=\cup_k X_k$ be a total partition of the instance space according to the sensitive attribute $s:\mathcal{X}\to[K]$. Denote a finite training sample  by $\mathcal{S}=\{(x_1,y_1),\ldots,(x_N,y_N)\}$ and write $S_k=\mathcal{S}\cap X_k$. For each group $S_k$, the fairness constraint in Definition \ref{def::conditional_convar} over the training sample can be written as:
\begin{equation}\label{eq:constraint_rho}
\frac{1}{|S_k|}\,\Big|\sum_ {x_i\in S_k}  (h(x_i) -\rho)\Big|\;\le\;\frac{\epsilon}{2},
\end{equation}
for some hyper-parameter $\rho\in[0,\,1]$. Precisely, if the optimization variables $h(x_i)$ satisfy the constraint (\ref{eq:constraint_rho}), then Definition \ref{def::conditional_convar} holds over the training sample by the triangle inequality. Conversely, if Definition \ref{def::conditional_convar} holds, then the constraint (\ref{eq:constraint_rho}) also holds where:
\begin{equation*}
  2\rho = \max_{k\in[K]} \mathbb{E}_{\textbf{x}}[h(\textbf{x})\,|\,\textbf{x}\in S_k] \;+\; \min_{k\in[K]} \mathbb{E}_{\textbf{x}}[h(\textbf{x})\,|\,\textbf{x}\in S_k].
\end{equation*}
Therefore, to learn the post-processing rule $h(x)$, we solve the optimization problem:
\begin{align}\label{eq::final_lp}
\nonumber&\min_{0\le h(x_i)\le 1} \quad &&\sum_{x_i\in\mathcal{S}} (\gamma/2)\,h(x_i)^2\,-\,f({x}_i)\,h(x_i) \\
    &\text{s.t.} &&\forall k\in[K]: \big|\sum_ {x_i\in S_k}  (h(x_i)-\rho)\big| \;\le\;\epsilon_k,
\end{align}
in which $\epsilon_k = |S_k|\,\epsilon/2$ for all $k\in[K]$. Using Lagrange duality we show in Appendix \ref{appendix:proof_correctness} that solving the above optimization problem is equivalent to Algorithm \ref{algorithm}. In Appendix \ref{sect::appendix_non_binary_s}, we  show that if $\epsilon=0$, an alternative formulation can be used to minimize the same objective while satisfying the fairness constraint but \emph{without} introducing a hyperparameter $\rho$. To reiterate, $\rho\in[0,1]$ is tuned via a validation dataset and $\gamma>0$ is a hyperparameter that controls randomization\footnote{Code is available at: \url{https://github.com/google-research/google-research/tree/master/ml_debiaser}.}.

\section{Theoretical Analysis}\label{sect::analysis}
Our first theoretical result is to show that RTO satisfies the desired fairness guarantees.

\begin{theorem}[Correctness]\label{theorem::correctness}
    Let $h_\gamma:\mathcal{X}\to[0,1]$ be the randomized predictor in Equation~\ref{eq:prediction_rule} learned by applying the update rules in Equation~\ref{eq:update_rules} on a fresh sample of size $N$ until convergence with learning rates satisfying the Robbins and Monro condition \citep{robbins1951stochastic}. Then, $h_\gamma$ satisfies $\epsilon$ statistical parity on the training sample. Moreover, with a probability of at least $1-\delta$, the following bound on bias holds w.r.t. the underlying distribution:
    \begin{equation}
    \max_{k\in[K]} \mathbb{E}[h(\textbf{x})\,|\,\textbf{x}\in X_k]\;-\;\min_{k\in[K]} \mathbb{E}[h(\textbf{x})\,|\,\textbf{x}\in X_k] \le \epsilon + 8 \sqrt{\frac{2\log \frac{eN}{2}}{N}} + 2 \sqrt{\frac{\log \frac{2K}{\delta}}{N}}.
\end{equation}
\end{theorem}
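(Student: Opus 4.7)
The theorem has two parts, a training-sample feasibility claim and a population claim, and the natural plan is to prove them in that order.

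For the training-sample part, I would invoke the equivalence (derived in Appendix A via Lagrangian duality) between Algorithm 1 and the constrained convex program in Equation (7). The objective is strongly convex in $h$ because of the quadratic regulariser $(\gamma/2)h(x_i)^2$, and projected SGD on the dual variables $(\lambda_k,\mu_k)$ under Robbins--Monro step sizes converges almost surely to the unique saddle point. At that saddle point the primal KKT feasibility conditions $\bigl|\sum_{x_i\in S_k}(h_\gamma(x_i)-\rho)\bigr|\le |S_k|\,\epsilon/2$ hold for every $k$, and the triangle-inequality computation given in the text just after Equation (6) then turns these per-group inequalities into $\max_k \widehat{\mathbb{E}}_k[h_\gamma]-\min_k \widehat{\mathbb{E}}_k[h_\gamma]\le \epsilon$, where $\widehat{\mathbb{E}}_k$ denotes average over $S_k$. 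This is the first claim.

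For the population bound I would use uniform convergence over the class the algorithm can output. Writing $c_k=\lambda_k-\mu_k$, every output of Algorithm 1 lies in $\mathcal{H}=\bigl\{x\mapsto [\min\{1,(f(x)-c_{s(x)})/\gamma\}]^+ : c_1,\dots,c_K\in\mathbb{R}\bigr\}$, and its restriction to any single group is a one-parameter family of monotone $[0,1]$-valued functions of $f(x)$. The resulting subgraph class has VC dimension at most $2$, so its growth function is bounded by $(eN/2)^2$, and a standard symmetrisation plus bounded-differences argument gives, for each fixed group and with probability at least $1-\delta/K$, a uniform bound $\sup_{h\in\mathcal{H}}|\widehat{\mathbb{E}}_k[h]-\mathbb{E}_k[h]|\le 4\sqrt{2\log(eN/2)/N}+\sqrt{\log(2K/\delta)/N}$, where $\mathbb{E}_k[h]=\mathbb{E}[h(\tbx)\mid \tbx\in X_k]$. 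A union bound over the $K$ groups absorbs $K$ into the logarithm, and the elementary inequality $\max_k a_k-\min_k a_k\le \max_k b_k-\min_k b_k+2\max_k|a_k-b_k|$ with $a_k=\mathbb{E}_k[h_\gamma]$ and $b_k=\widehat{\mathbb{E}}_k[h_\gamma]$ turns the empirical $\epsilon$ from part one into the stated population bound; the factor $8=2\times 4$ reflects the two-sided (max versus min) use of the uniform bound.

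The main obstacles are two technical subtleties. First, $|S_k|$ is itself random, so bounding the truly conditional mean $\mathbb{E}[h(\tbx)\mid \tbx\in X_k]$ by its sample analogue needs either an extra conditioning step on $|S_k|$ or a direct ratio-type concentration argument, and keeping the constants in the stated form probably requires the former. Second, one must be careful to treat the internal randomisation of $h_\gamma$ correctly: because the parity statement is phrased in terms of $\mathbb{E}[h(\tbx)\mid X_k]$ where the expectation is only over $\tbx$, internal coin flips do not appear and the function $h_\gamma$ in $\mathcal{H}$ can be treated as deterministic once the dual variables are fixed, so the complexity per group is genuinely one-parameter and the joint bound over all $K$ groups is obtained by union bound rather than by treating $\mathcal{H}$ as having $K$ free parameters simultaneously.
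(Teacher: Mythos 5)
Your proposal follows essentially the same route as the paper: duality plus projected SGD under Robbins--Monro to get exact feasibility of the per-group constraints on the training sample, then a capacity bound for the one-parameter-per-group ramp class (the paper bounds its Rademacher complexity by that of 0-1 thresholds on $\mathbb{R}$, VC dimension 2, which matches your subgraph-class argument) followed by a union bound over the $K$ groups, yielding the same constants. The only cosmetic difference is bookkeeping---the paper centers each group mean around $\rho$ and doubles, while you compare empirical and population max-minus-min directly---and your remark about the randomness of $|S_k|$ versus the $N$ in the denominator is a genuine subtlety that the paper itself glosses over rather than something your argument is missing.
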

\begin{proof}
The proof is in Appendix~\ref{appendix:proof_correctness}. We make use of strong duality, which holds by Slater's condition \citep{boyd2004convex}. The update rules correspond to the projected SGD method on the dual problem. This establishes the guarantee on the training sample. For the underlying distribution, we bound the Rademacher complexity \citep{bousquet2003introduction} of the function class $\mathcal{H}_\gamma$ of Figure \ref{fig:fairness_Q}(a) by that of 0-1 thresholding rules over $\mathbb{R}$, from which a generalization bound is derived.
\end{proof}
The following guarantee shows that the randomized prediction rule converges to the Bayes optimal unbiased classifier if the original classifier is Bayes consistent.

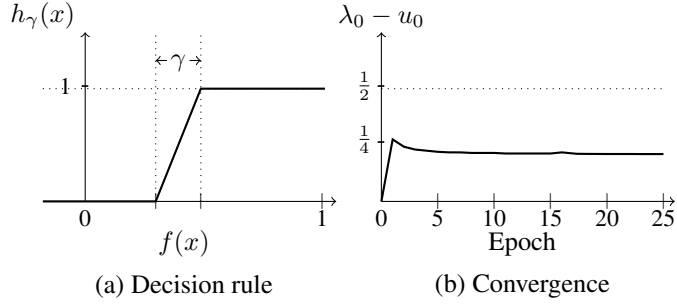
\begin{figure}
    \centering
    \begin{tikzpicture}[scale=0.75]
    \node at (0.75,2) {-};
    \node at (0.4,2.05) {\small 1};
    \node at (4.95,0) {{\tiny$|$}};
    \node at (4.95,-0.3) {\small 1};
    \node at (0.75,-0.3) {\small 0};
    \node at (2.5,-0.75) {$f(x)$};
    \draw[->] (0,0) -- (5.2,0);
    \draw[->] (0.75,0) -- (0.75,3);
    \draw[dotted] (2,0) -- (2,3);
    \draw[dotted] (2.8,0) -- (2.8,3);
    \draw[dotted] (0,2) -- (5,2);
    
    \draw[->] (2.2,2.5) -- (2,2.5);
    \draw[->] (2.6,2.5) -- (2.8,2.5);
    \node at (2.4,2.45) {$\gamma$};
    \node at (0,3.3) {$h_\gamma(x)$};
    
    \node at (2,0) {{\tiny$|$}};
    \node at (2.8,0) {{\tiny$|$}};
    
    \draw[thick] plot  coordinates{(0,0) (2,0) (2.8,2) (5,2)};


    \node at (6,1) {-};
    \node at (5.7,1.1) {\small $\frac{1}{4}$};
    \node at (6,2) {-};
    \node at (5.7,2.05) {\small $\frac{1}{2}$};
    \draw[dotted] (6,2) -- (11,2);

    \node at (11,0) {{\tiny$|$}};
    \node at (11,-0.3) {\small 25};
    \node at (10,0) {{\tiny$|$}};
    \node at (10,-0.3) {\small 20};
    \node at (9,0) {{\tiny$|$}};
    \node at (9,-0.3) {\small 15};
    \node at (8,0) {{\tiny$|$}};
    \node at (8,-0.3) {\small 10};
    \node at (7,0) {{\tiny$|$}};
    \node at (7,-0.3) {\small 5};

    \node at (6,-0.3) {\small 0};
    \node at (8.5,-0.75) {Epoch};
    \draw[->] (6,0) -- (6,3);
    \draw[->] (6,0) -- (11.2,0);
    
    \node at (6,3.3) {$\lambda_0-u_0$};
    
    \draw[thick] plot  coordinates{
(6.0, 0.00) (6.2, 1.10) (6.4, 0.97) (6.6, 0.92) (6.8, 0.9) (7.0, 0.88) (7.2, 0.87) (7.4, 0.87) (7.6, 0.86) (7.8, 0.86) (8.0, 0.86) (8.2, 0.85) (8.4, 0.85) (8.6, 0.85) (8.8, 0.85) (9.0, 0.85) (9.2, 0.87) (9.46, 0.845) (9.6, 0.843) (9.8, 0.842) (10.0, 0.841) (10.2, 0.841) (10.4, 0.841) (10.6, 0.84) (10.8, 0.84) (11.0, 0.84)    };
    
    \node at (2.5,-1.5) {(a) Decision rule};
    \node at (8.5,-1.5) {(b) Convergence};
\end{tikzpicture}
    \caption{(a) The learned post-processing rule $h_\gamma(x)$ in Equation~\ref{eq:prediction_rule} as a function of the classifier's score $f(x)$ over one subpopulation. Randomization is applied when $h_\gamma(x)\in(0,1)$. (b) The value of $\lambda_0-u_0$ is plotted against the number of epochs in projected SGD applied to the random forests classifier. The classifier is trained on the Adult dataset to implement statistical parity with respect to the sex attribute (cf. Section \ref{sect::experiments}). We observe fast convergence in agreement with Proposition \ref{prop::subgradient_udpates}.
    \label{fig:fairness_Q}}
\end{figure}

\begin{theorem}\label{prop::excess_risk}
    Let $h^\star = \arg\min_{h\in\mathcal{H}_\epsilon}\,\mathbb{E}[h(\tbx)\neq \textbf{y}]$, 
    where $\mathcal{H}_\epsilon$ is the set of binary predictors on $\mathcal{X}$ that satisfy fairness on the training sample according to Definition \ref{def::conditional_convar} for $\epsilon\ge 0$.
    Let $h_\gamma:\mathcal{X}\to[0,1]$ be the randomized rule in Algorithm~\ref{algorithm}.
    If $h_\gamma$ is trained  on a fresh data of size $N$, then there exists a value of $\rho\in[0,1]$ independent of $N$ such that the following holds with a probability of at least $1-\delta$:
    \begin{align*}
        \mathbb{E}[\mathbb{I}\{h_\gamma(\tbx)\neq &\textbf{y}\}]\le \mathbb{E}[\mathbb{I}\{h^\star(\tbx)\neq \textbf{y}\}]+2\gamma +\frac{8(2+\frac{1}{\gamma})}{N^{\frac{1}{3}}} + \mathbb{E}|2\eta(\tbx)-1-f(\tbx)|  +4 \sqrt{\frac{2K+2\log \frac{2}{\delta}}{N}},
    \end{align*}
    where $\eta(x)=p(\textbf{y}=1|\tbx=x)$ is the Bayes regressor and $K$ is the number of groups $X_k$.
\end{theorem}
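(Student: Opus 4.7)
The plan is to decompose the excess Bayes risk into four pieces and bound each separately: (i) an approximation term from using the classifier score $f$ in place of $2\eta-1$, (ii) a regularization penalty of order $\gamma$, (iii) a uniform-convergence term over the class $\mathcal{H}_\gamma$ of rules of the form (\ref{eq:prediction_rule}), and (iv) an optimization error for projected SGD on the dual. The starting identity, valid for any randomized predictor $h:\mathcal{X}\to[0,1]$, is
\begin{equation*}
\mathbb{E}[\mathbb{I}\{h(\tbx)\neq \textbf{y}\}] \;=\; \mathbb{E}[\eta(\tbx)] - \mathbb{E}[h(\tbx)(2\eta(\tbx)-1)],
\end{equation*}
so that the excess risk equals $\mathbb{E}[(h^\star(\tbx)-h_\gamma(\tbx))(2\eta(\tbx)-1)]$. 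Adding and subtracting $\mathbb{E}[(h^\star-h_\gamma)f]$ and using $|h^\star-h_\gamma|\le 1$ pointwise isolates the portion involving $2\eta-1-f$ and bounds it by $\mathbb{E}|2\eta(\tbx)-1-f(\tbx)|$, precisely the approximation term in the statement.

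To handle the remaining $\mathbb{E}[(h^\star-h_\gamma)f]$, I fix $2\rho = \max_k \mathbb{E}[h^\star(\tbx)\mid X_k] + \min_k \mathbb{E}[h^\star(\tbx)\mid X_k]$; this value is a population quantity and does not depend on $N$ (as required by the statement), and for it $h^\star$ is feasible for the empirical program (\ref{eq::final_lp}) up to a Hoeffding-type concentration slack of order $\sqrt{(K+\log(1/\delta))/N}$ across the $K$ groups, which accounts for the final $\sqrt{\cdot}$ term in the bound. Writing $J(h)=(\gamma/2)\mathbb{E}[h^2]-\mathbb{E}[hf]$ and its empirical counterpart $\hat J$, near-optimality of $h_\gamma$ for $\hat J$ on its empirical feasible set then yields
\begin{equation*}
\mathbb{E}[h^\star f]-\mathbb{E}[h_\gamma f] \;\le\; \tfrac{\gamma}{2}\bigl(\mathbb{E}[(h^\star)^2]-\mathbb{E}[h_\gamma^2]\bigr) + 2\!\!\sup_{h\in\mathcal{H}_\gamma\cup\{h^\star\}}\!\!|J(h)-\hat J(h)| + \mathrm{err}_{\mathrm{opt}},
\end{equation*}
whose first summand is at most $\gamma/2$ because $h\in[0,1]$. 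Combined with the analogous regularization gap that appears once more when translating the regularized population optimum back to the unregularized Bayes risk, the regularization contributions add up to the $2\gamma$ in the statement.

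The third step controls the uniform deviation $\sup_{\mathcal{H}_\gamma}|J-\hat J|$. The class $\mathcal{H}_\gamma$ is parametrized by $K$ scalars $\lambda_k-\mu_k$ and, by Equation~\ref{eq:prediction_rule}, each of its elements is $(1/\gamma)$-Lipschitz in $f(x)$ within a single group. A Rademacher argument via Talagrand's contraction lemma reduces the complexity of $\mathcal{H}_\gamma$ to that of $0\text{-}1$ thresholding rules on $[-1,1]$ up to a $1/\gamma$ prefactor, and a covering argument that balances the discretization scale against this Lipschitz constant is what produces the mixed $8(2+1/\gamma)\,N^{-1/3}$ rate rather than the naive $(1/\gamma)\,N^{-1/2}$. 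Finally, the optimization error $\mathrm{err}_{\mathrm{opt}}$ for the projected SGD iterates is handled by the Robbins-Monro analysis already used in the proof of Theorem~\ref{theorem::correctness} and is absorbed into the lower-order terms.

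The main obstacle is the third step: producing the stated rate with an explicit dependence on $\gamma$ and a single, $N$-independent choice of $\rho$. Naive uniform-convergence gives $(1/\gamma)\sqrt{1/N}$, which has the wrong exponent; the $N^{-1/3}$ rate comes from optimally trading off the discretization/covering scale against the Lipschitz constant $1/\gamma$ of $\mathcal{H}_\gamma$, and this same tradeoff constrains the choice of $\rho$ above so that $h^\star$ simultaneously remains (approximately) feasible for the empirical problem.
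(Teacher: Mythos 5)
Your plan follows essentially the same route as the paper's proof: reduce the 0--1 risk to the linear functional $\mathbb{E}[h(\tbx)(2\eta(\tbx)-1)]$, peel off $\mathbb{E}|2\eta(\tbx)-1-f(\tbx)|$ by the triangle inequality, pick $\rho$ as the midpoint of the extreme group means of $h^\star$ so that $h^\star$ is feasible for (\ref{eq::final_lp}), and combine the (regularized) ERM property of $h_\gamma$, costing the $O(\gamma)$ terms, with a uniform deviation bound whose $N^{-1/3}$ rate comes from trading a discretization of the score interval against the $(2+\tfrac{1}{\gamma})$-Lipschitz constant of the loss within each group --- exactly the tradeoff the paper implements via the robustness framework of \citet{xu2012robustness} ($K\times R$ cells, $R=N^{1/3}$), applied to $h_\gamma$ and an intermediate regularized optimum with a union bound. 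The only substantive difference is bookkeeping: since $h^\star$ satisfies fairness on the training sample, the paper takes it to be exactly feasible and spends the entire $4\sqrt{(2K+2\log\frac{2}{\delta})/N}$ budget on the two generalization bounds, whereas you additionally charge a Hoeffding feasibility slack to that same term, which is unnecessary and, if kept, would require a small perturbation argument to compare $\hat J(h_\gamma)$ with $\hat J(h^\star)$.
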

\begin{proof}
The full proof is in Appendix \ref{appendix:proof_excess_risk}. First, we show that minimizing the probability of error can be achieved by maximizing $\mathbb{E}[f(\tbx)\cdot (2\eta(\tbx)-1)]$. We use the regularized loss instead, which is strongly convex. Using Lipschitz continuity of the decision rule when $\gamma>0$ (cf. Figure~\ref{fig:fairness_Q}(a)), and the robustness framework of \cite{xu2012robustness}, we prove a generalization bound and proceed with a series of inequalities to establish the main theorem. 
\end{proof}
Thus, if the original classifier is Bayes consistent, namely $\mathbb{E}\,|2\eta(\tbx)-1-f(\tbx)|\to 0$ as the  sample size goes to infinity, and if $N\to\infty$, $\gamma\to 0^+$ and $\gamma N^{\frac{1}{3}}\to\infty$, then  $\mathbb{E}[ h_\gamma(\tbx)\neq \textbf{y}] \;\xrightarrow{P}\; \mathbb{E}[ h^\star(\tbx)\neq \textbf{y}]$. Hence, Algorithm \ref{algorithm} converges to the \emph{optimal} prediction rule subject to the fairness constraints.

\textbf{Convergence Rate.}\quad As we show in Appendix \ref{appendix:proof_correctness}, the update rules in Equation~\ref{eq:update_rules} perform a projected stochastic gradient descent on the following optimization problem:
\begin{equation}\label{eq:FF}
    \min_{\mu,\lambda\ge 0} \;F= \mathbb{E}_{\tbx}\Big[\epsilon\,(\lambda_{s(\tbx)}+\mu_{s(\tbx)})+\rho\,(\lambda_{s(\tbx)}-\mu_{s(\tbx)}) +\xi_{\gamma}(f(\tbx)-(\lambda_{s(\tbx)}-\mu_{s(\tbx)}))\Big],
\end{equation}
where $\xi_{\gamma}$ is given by Equation \ref{eq::xi_funct}.
The following proposition shows that the post-processing rule can be efficiently computed. In practice, we observe fast convergence as demonstrated in Figure~\ref{fig:fairness_Q}(b).

\begin{proposition}\label{prop::subgradient_udpates}
    Let $\mu^{(0)}=\lambda^{(0)} = 0$ and write  $\mu^{(t)},\lambda^{(t)}\in\mathbb{R}^K$ for the value of the optimization variables after $t$ updates defined in Equation~\ref{eq:update_rules} for some fixed learning rate $\alpha_t=\alpha$. Let $\bar \mu = (1/T)\sum_{t=1}^T \mu^{(t)}(x)$ and $\bar\lambda = (1/T)\sum_{t=1}^T \lambda^{(t)}(x)$. Then,
    \begin{equation}
           \mathbb{E}[\bar F] - F^\star \le (1+\rho+\epsilon)^2\alpha + \frac{||\mu^\star||_2^2 + ||\lambda^\star||_2^2}{2T\alpha},
    \end{equation}
    where $\bar F:\mathbb{R}^K\times\mathbb{R}^K\to\mathbb{R}$ is the objective function in (\ref{eq:FF}) using the averaged solution $\bar\mu$ and $\bar\lambda$ while $F^\star$ is its optimal value. In particular, $\mathbb{E}[\bar F] - F^\star=\mathcal{O}(\sqrt{K/T})$ when $\displaystyle \alpha = \mathcal{O}(\sqrt{K/T})$.
\end{proposition}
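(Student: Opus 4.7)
The plan is to recognize that the update rules in Equation~\ref{eq:update_rules} are precisely projected stochastic subgradient steps on the convex dual objective $F(\lambda,\mu)$ of Equation~\ref{eq:FF}, with projection onto the nonnegative orthant realized by the coordinatewise $[\,\cdot\,]^+$ operator. Once this identification is set up, the claim reduces to the classical convergence bound for projected SGD applied to a convex (possibly nonsmooth) objective with bounded stochastic subgradients, so the only substantive work is to exhibit an unbiased subgradient estimator and to bound its norm.

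For convexity, $\xi_\gamma$ is a Huber-type convex loss, the terms linear in $\lambda$ and $\mu$ are convex, and expectation preserves convexity, so $F$ is convex on the closed convex cone $\{(\lambda,\mu)\ge 0\}$. For unbiasedness, writing $F=\sum_k p(s=k)\,\mathbb{E}[\,\cdot\mid s=k]$ and differentiating, I get that $\partial F/\partial \lambda_k$ equals $p(s=k)$ times the group-$k$ conditional expectation of $\epsilon/2+\rho-\xi_\gamma'(f(\tbx)-(\lambda_k-\mu_k))$, with an analogous expression for $\mu_k$. Sampling $\tbx\sim p$ and populating only the $s(\tbx)$-th coordinates of $g_\lambda$ and $g_\mu$ as in Equation~\ref{eq:update_rules} reproduces this vector in expectation; hence the update is a genuine projected stochastic subgradient step.

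To bound the subgradient norm, I would use $\xi_\gamma'(w)\in[0,1]$ for all $w\in\mathbb{R}$, which gives $|g_{\lambda_{s(\tbx)}}|,\,|g_{\mu_{s(\tbx)}}|\le 1+\rho+\epsilon/2 \le 1+\rho+\epsilon$, while the remaining $2K-2$ coordinates vanish, so $\|g\|_2^2\le 2(1+\rho+\epsilon)^2$. I would then invoke the standard projected-SGD guarantee for convex Lipschitz objectives with constant step $\alpha$ and Polyak averaging, which gives
\[
\mathbb{E}[F(\bar w)] - F^\star \;\le\; \frac{\|w^\star - w^{(0)}\|_2^2}{2T\alpha} \;+\; \frac{\alpha\,G^2}{2},
\]
where $w=(\lambda,\mu)\in\mathbb{R}^{2K}$ and $G^2$ is any upper bound on $\mathbb{E}\|g\|_2^2$. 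Substituting $w^{(0)}=0$, $\|w^\star\|_2^2=\|\lambda^\star\|_2^2+\|\mu^\star\|_2^2$, and $G^2\le 2(1+\rho+\epsilon)^2$ (absorbing the factor of two into the leading constant) yields the proposition's inequality. Minimizing the right-hand side in $\alpha$ and invoking boundedness of $\|w^\star\|_2=\mathcal{O}(\sqrt{K})$, which follows from the Slater condition already used in Theorem~\ref{theorem::correctness}, produces the $\mathcal{O}(\sqrt{K/T})$ rate with $\alpha=\Theta(\sqrt{K/T})$.

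The main obstacle is bookkeeping rather than depth: one must carefully verify that the samplewise $g$ is an unbiased element of the subdifferential of the population-level $F$ (using the chain rule through the sensitive-attribute sampling and convexity of $\xi_\gamma$, so Jensen applies when comparing $\mathbb{E}[F(\bar w)]$ to the averaged-iterate analysis), and track constants so that the final bound matches the stated $(1+\rho+\epsilon)^2\alpha$ prefactor exactly rather than up to an unspecified universal constant.
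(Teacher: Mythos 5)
Your proposal is correct and follows essentially the same route as the paper's proof: identify the updates as projected stochastic subgradient steps on the convex dual objective in Equation~\ref{eq:FF}, bound each nonzero gradient coordinate by $1+\rho+\epsilon$ so that $\|g\|_2^2\le 2(1+\rho+\epsilon)^2$, apply the standard constant-stepsize projected-SGD bound (the paper cites \citealp{boyd2008stochastic} and uses nonexpansiveness of the projection), and pass to the averaged iterates via Jensen, with the $\mathcal{O}(\sqrt{K/T})$ rate following from $\|\lambda^\star\|_2^2+\|\mu^\star\|_2^2=\mathcal{O}(K)$. Your extra care about unbiasedness of the samplewise subgradient is a detail the paper glosses over, but it changes nothing substantive.
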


The proof of Proposition \ref{prop::subgradient_udpates} is in Appendix \ref{appendix:proof_convergence}. As shown in Figure \ref{fig:fairness_Q}(a), the hyperparameter $\gamma$ controls the width of randomization around the thresholds. A large value of $\gamma$ may reduce the accuracy of the classifier. On the other hand, $\gamma$ cannot be zero because randomization around the threshold is, in general, necessary for Bayes risk consistency as shown earlier in Example \ref{example::randomization}.

\section{Experiments}\label{sect::experiments}
\paragraph{Baselines and Experimental Setup.}We compare against three  post-processing methods: (1) the algorithm of \cite{hardt2016equality} (2) the shift inference method, first introduced in \cite{saerens2002adjusting} and used more recently in \cite{wang2020towards}, and (3) the Reject Option Classifier (ROC) \citep{kamiran2012decision}. We also include the reduction approach of \cite{pmlr-v80-agarwal18a} to compare the performance against in-processing rules. We briefly review each of these methods next.

The post-processing method of \cite{hardt2016equality} is a randomized post-processing rule. It was originally developed for equalized odds and equality of opportunity. Nevertheless, it can be modified to accommodate other  criteria, such as statistical parity \citep{pmlr-v80-agarwal18a,fairlearn}.

The shift inference rule, on the other hand, is a post-hoc correction that views bias as a shift in distribution, hence the name. It is based on the identity $    r(\textbf{y}|\textbf{s},\textbf{x})\; \propto \;q(\textbf{y}|\textbf{s},\textbf{x})\, \cdot {r(\textbf{y},\textbf{s})}/{q(\textbf{y},\textbf{s})}$,
which holds for any two distributions $r$ and $q$ on the product space of labels $\textbf{y}$, sensitive attributes $\textbf{s}$, and instances $\textbf{x}$ if they share the same marginal $r(\textbf{x})=q(\textbf{x})$ \citep{wang2020towards}. By equating, $q(\textbf{y}|\textbf{s},\textbf{x})$ with the classifier's output based on the biased distribution and $r(\textbf{y}|\textbf{s},\textbf{x})$ with the unbiased classifier, the predictions of the classifier $q(\textbf{y}|\textbf{s},\textbf{x})$ can be post-hoc corrected for bias by multiplying its probability score with the ratio $p(\textbf{y}) p(\textbf{s})/p(\textbf{y},\textbf{s})$.

The reject option classifier (ROC) proposed by \cite{kamiran2012decision} is a deterministic thresholding rule. It enumerates all possible values of some tunable parameter $\theta$ up to a given precision, where $\theta=0$ corresponds to the original classifier. Candidate thresholds are then tested on the data. 

Finally, the reduction approach of \cite{pmlr-v80-agarwal18a} is an in-processing method that can be applied to black-box classifiers but it requires retraining the model several times. More precisely, let  $h$ be a hypothesis in the space $\mathcal{H}$ and $M$ be a matrix,  \cite{pmlr-v80-agarwal18a} showed that minimizing the error of $h$ subject to constraints of the form $M\mu(h)\le c$, 
where $\mu(h)$ is a vector of conditional moments on $h$ of a particular form, can be reduced (with some relaxation) to a sequence of cost-sensitive classification tasks for which many algorithms can be employed.

We use the implementations of \cite{hardt2016equality} and \cite{pmlr-v80-agarwal18a} in the FairLearn software package \citep{fairlearn}. The training data used for  the post-processing methods is always a fresh sample, i.e. different from the data used to train the original classifiers. Specifically, we split the data that was not used in the original classifier into three subsets of equal size: (1) training data for the post-processing rules, (2) validation for hyperparameter selection, and (3) test data. The value of the hyper-parameter $\theta$ of the ROC algorithm is chosen in the grid $\{0.01,\, 0.02,\, \ldots,\, 1.0\}$. In the proposed algorithm, the parameter $\gamma$ is chosen in the grid $\{0.01,\, 0.02,\, 0.05,\, 0.1,\, 0.2\}$ while $\rho$ is chosen in the gird $\mathbb{E}[\textbf{y}]\pm\{0,\, 0.05,\, 0.1\}$. All hyper-parameters are selected based on a separate validation dataset. For the in-processing approach, we
used the Exponentiated Gradient method as proposed by
\cite{pmlr-v80-agarwal18a} with its default settings in the FairLearn package (e.g. max iterations of 50).

\begin{table*}
    \centering
    \small
    \caption{A comparison of four post-processing methods and the reduction approach of \cite{pmlr-v80-agarwal18a} on 3 datasets. The classifiers are random forests (RF), $k$-NN, MLP, logisitic regression (LR), ResNet50 trained from scratch (R50/S), ResNet50 pretrained on ImageNet  (R50/I), MobileNet trained from scratch (MN/S) and MobileNet pretrained on ImageNet (MN/I). Values in \textbf{bold} correspond to cases where debiasing \textbf{fails}. ROC may fail in $k$-NN and in neural networks because debiasing them can require randomization. Original bias in the dataset is provided in the leftmost column.
    }\vspace{1mm}
    \begin{tabular}{cccccccc}
    \toprule
     \multicolumn{8}{c}{\bf Bias} \\\midrule
         Dataset&Classifier&\em Original&RTO&Hardt, 2016&Shift Inference &ROC &Reduction \\\midrule
    {\sc adult}    &\sc rf   & \em .38   & .01 &  .01  &\bf  .16&.02 &.01\\ 
                    \footnotesize (Bias = .19)& $k${\sc nn}   & \em .24  &  .02  &  .01   &\bf .08 &\bf .08  &.01\\ 
                    & \sc mlp  &\em  .29 & .01    & .02  &\bf .10 & .02  &.01\\ 
                    & \sc lr   &\em .39  & .01   &  .02  & \bf .10 & .01 &.01 \\ \midrule

    {\sc dccc}    &\sc rf   &\em .07    &  .01  &   .01  &\bf  .09  & .02  &.01\\
                    \footnotesize (Bias = .21)& $k${\sc nn}   &\em  .10   &  .01    & .01   &\bf  .18 &  .02  &.01\\ 
                    &\sc mlp   &\em  .13   & .01    &  .01   &\bf .12&  .02  &.01 \\ 
                    &\sc lr   & \em .12   & .01    & .01    &\bf .13  & .01  &.01\\ \midrule

    {\sc celeba}    & \textsc{r50/s}   &\em  .43   &  .01   &   .01 &\bf  .38 &\bf .08 & $\star$\\
                    \footnotesize (Bias = .33)& \textsc{r50/i}   &\em   .40 &  0.02 & .01   &\bf  .35   &\bf  .15 &  $\star$ \\
                    & \textsc{mn/s}   &\em   .35 & .01 &  .01  &\bf .24 &  .01 & $\star$\\
                    & \textsc{mn/i}  &\em  .38  & .002   & .002  &\bf .34 &\bf .10 & $\star$ \\\bottomrule
   \end{tabular}
    \label{tab:postprocessing_adult_default}
    \vspace{-2mm}
\end{table*}

\paragraph{Tabular Data.}
We evaluate  performance on two real-world datasets, namely the Adult income dataset \citep{kohavi1996scaling} and the Default of Credit Card Clients (DCCC) dataset \citep{yeh2009comparisons}, both from the UCI Machine Learning Repository \citep{UCI}. The Adult dataset contains 48,842 records with 14 attributes each and the goal is to predict if the income of an individual exceeds \$50K per year. The DCCC dataset contains 30,000 records with 24 attributes, and the goal is to predict if a client will default on their credit card payment. We set sex as a sensitive attribute. In DCCC, we introduce bias to the training set to study the case in which bias shows up in the training data only (e.g. due to the data curation process) but the test data remains unbiased (cf. \citep{torralba2011unbiased} and \citep{de2019does} who discuss similar observations in common benchmark datasets). Specifically, if $s(\tbx)=y(\tbx)$ we keep the instance and otherwise drop it with probability 0.5. 

We train four classifiers: (1) random forests with depth 10, (2) $k$-NN with $k=10$, (3) a two-layer neural network with 128 hidden nodes, and (4) logistic regression whose parameter $C$ is fine-tuned from a grid of  values in a logarithmic scale between $10^{-4}$ and $10^4$ using 10-fold cross validation. The learning rate in our algorithm is fixed to $\displaystyle 10^{-1}(K/T)^{1/2}$, where $T$ is the number of steps, and $\epsilon=0$. 

Table \ref{tab:postprocessing_adult_default} (Top and Middle) shows the bias on \emph{test} data after applying each post-processing method. The column marked as \lq\lq original" corresponds to the original classifier without alteration. As shown in the table, the shift-inference method does not succeed at controlling statistical parity while ROC  can fail when the original classifier's output is concentrated on a few points because it does not randomize.

\paragraph{CelebA Dataset.}
Our second set of experiments builds on the task of predicting the \lq\lq attractiveness" attribute in the CelebA dataset \citep{liu2015faceattributes}. We reiterate that {we do not endorse the usage of vision models for such tasks}, and that we report these results because they exhibit sex-related bias. CelebA contains 202,599 images of celebrities annotated with 40 binary attributes, including sex. We use two standard architectures: ResNet50~\citep{he2016deep} and MobileNet \citep{howard2017mobilenets},  trained from scratch or pretrained on ImageNet ILSVRC2012 \citep{deng2009imagenet}. We resize images to $224\times 224$ and train with a fixed learning rate of 0.001 until the validation error converges.  We present the bias results in Table~\ref{tab:postprocessing_adult_default} (bottom). We observe that randomization is indeed necessary: ROC and Shift Inference both fail at debiasing the neural networks because they do not learn to randomize when most scores produced by neural networks are concentrated around the set $\{-1,\,+1\}$. 

\vspace{-3mm}
\paragraph{Impact on Test Accuracy.}
As shown in Table \ref{tab:acc_figures}, the proposed algorithm has a much lower impact on the test accuracy compared to  \cite{hardt2016equality} and even improves the test accuracy in DCCC because bias was introduced in DCCC to the training data only as discussed earlier. The tradeoff curves between accuracy and bias for both the proposed algorithm and \cite{hardt2016equality} are shown in Figure \ref{fig:tradeoffs1} ({\sc left}). Also, for a comparison with in-processing rules, we observe that the post-processing algorithm performs competitively with the reduction approach of \cite{pmlr-v80-agarwal18a}.

\begin{table}
    \centering
    \small
    \caption{A comparison of the test accuracy of the proposed algorithm against the algorithms of \cite{hardt2016equality} and the reduction approach of \cite{pmlr-v80-agarwal18a}. Both Shift Inference and ROC failed at debiasing all models (Table \ref{tab:postprocessing_adult_default}) so they are excluded from the comparison here. 
    }
    \begin{tabular}{cccccc}
    \toprule
     \multicolumn{6}{c}{\bf Test Accuracy} \\\midrule
      &&\em Original&RTO&Hardt, 2016&Reduction\\\midrule
     \multirow{4}{*}{\sc adult}     &\sc rf   &\em  85.7 $\pm$ .1\%   & 84.4 $\pm$ .1\%  &  81.0 $\pm$ .2\% & 83.9$\pm$.1\% \\ 
                    & $k${\sc nn}   &\em  86.8 $\pm$ .1\%  &  81.3 $\pm$ .2\% & 78.7 $\pm$ .2\%  &80.2$\pm$.1\% \\ 
                    & \sc mlp   &\em   85.5 $\pm$ .2\% & 83.5 $\pm$ .3\%  & 79.7 $\pm$ .2\% & 83.5$\pm$.1\%\\ 
                    & \sc lr   &\em  84.9 $\pm$ .2\%  & 83.0 $\pm$ .1\%  &  79.4 $\pm$ .2\% &83.3$\pm$.2\%\\ \midrule

     \multirow{4}{*}{\sc dccc}     & \sc rf   &\em  81.2 $\pm$ .2\%   & 81.8 $\pm$ .1\%  & 80.6 $\pm$ .2\% & 81.4$\pm$.3\%\\ 
                    & $k${\sc nn}   &\em  79.6 $\pm$ .2\%  &  80.4 $\pm$ .2\% & 78.7 $\pm$ .1\%   & 79.5$\pm$.1\%\\ 
                    &\sc mlp   &\em   80.5 $\pm$ .1\% & 81.3 $\pm$ .2\%  & 78.8 $\pm$ .2\%  & 81.3$\pm$.2\%\\ 
                    &\sc lr   &\em  80.6 $\pm$ .2\%  & 81.7 $\pm$ .1\%  &  78.3 $\pm$ .1\%  &80.5$\pm$.3\% \\ \midrule
     \multirow{4}{*}{\sc celeba}     &\sc r-s   &\em  77.8\%   & 71.3\%  & 65.9\% & $\star$\\ 
                    & \sc r-i   &\em  79.7\%  &  71.7\% & 67.5\%  & $\star$ \\ 
                    &\sc m-s   &\em   76.9\% & 71.8\%  & 66.4\%  & $\star$ \\ 
                    &\sc m-i   &\em  79.3\%  & 72.8\%  &  67.5\%  & $\star$\\ \bottomrule
    \end{tabular}
    \vspace{1mm}
    \label{tab:acc_figures}
\end{table}

\begin{figure}
\centering
\begin{minipage}{.59\textwidth}
  \centering
     \includegraphics[width=\linewidth]{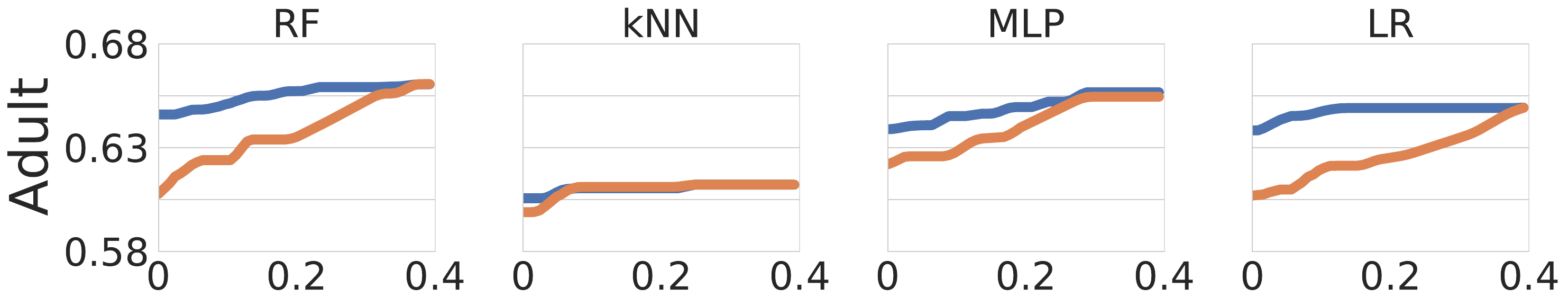}
    \includegraphics[width=\linewidth]{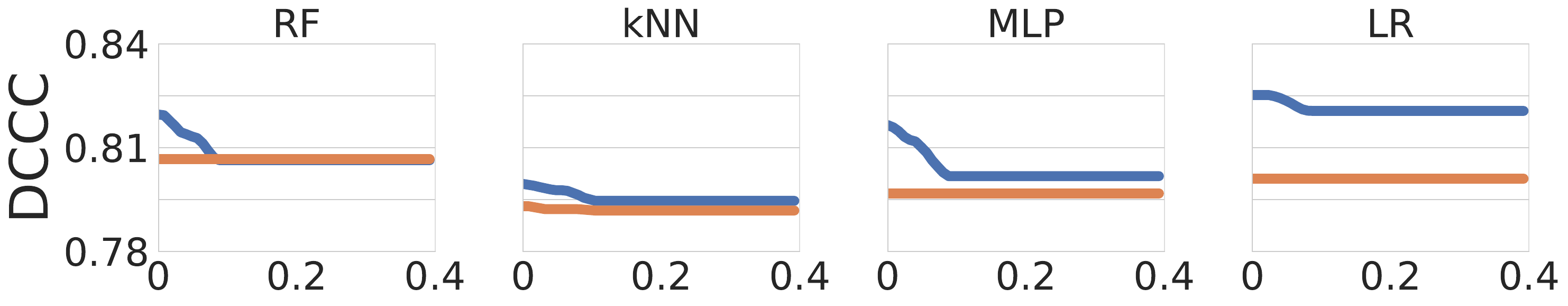}
    \includegraphics[width=\linewidth]{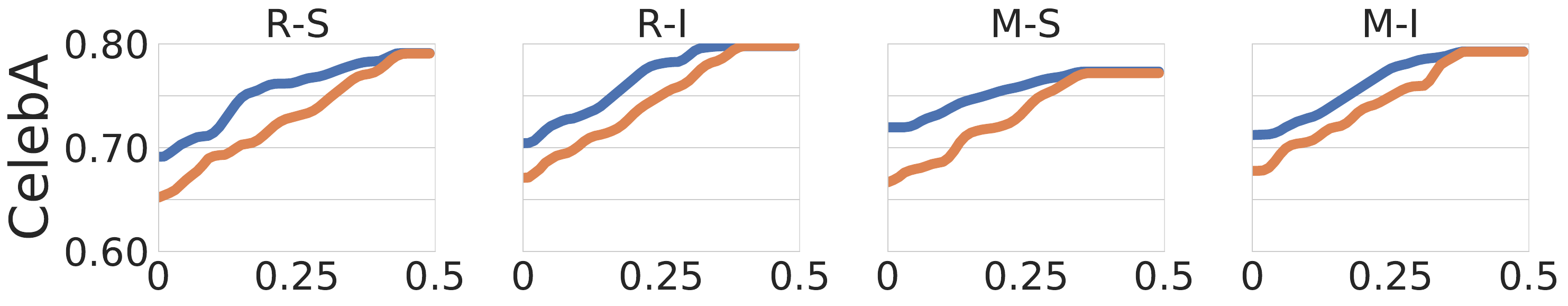}

\end{minipage}%
\begin{minipage}{.40\textwidth}
  \centering\vspace{7pt}
    \includegraphics[width=\linewidth]{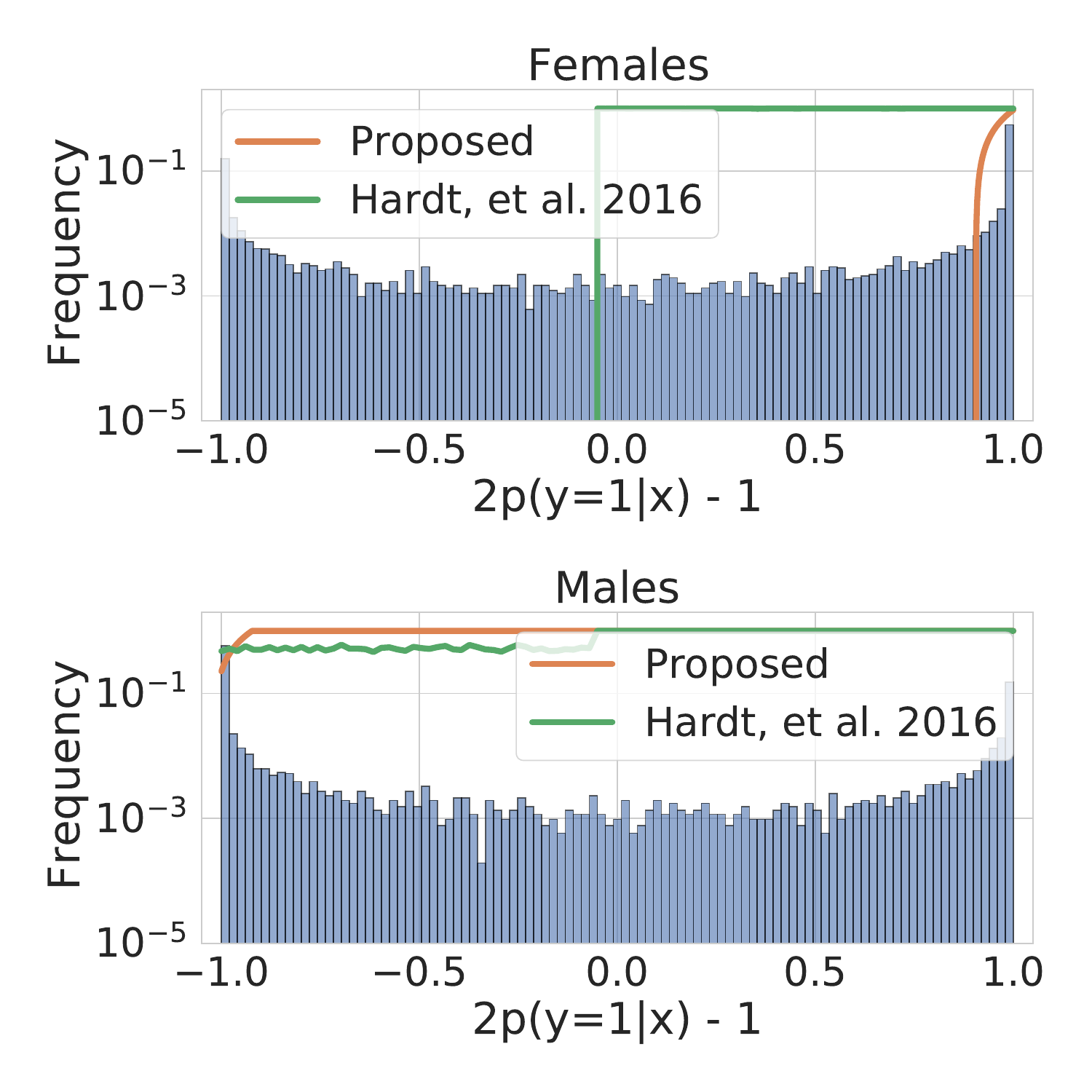}

\end{minipage}
     \caption{{\sc left:} The tradeoff curves are displayed for each classification problem, where blue curves are for the proposed RTO algorithm and amber curves are for \cite{hardt2016equality}. The $x$-axis is bias (Definition \ref{def::conditional_convar}) while the $y$-axis is test accuracy. 
     {\sc right:} The distribution of the scores produced by ResNet50 trained from scratch are shown for both subpopulations. The curves correspond to  $p(\textbf{y}=1|\textbf{x})$ of \cite{hardt2016equality} and the proposed algorithm when $\gamma=0.1$ and $\rho=\mathbb{E}[\textbf{y}]$.
     }
    \label{fig:tradeoffs1}
    \vspace{-4mm}
\end{figure}

\vspace{-3mm}
\paragraph{Impact of Scale.} Models trained at scale transfer better and enjoy improved out-of-distribution robustness~\citep{djolonga2021onrobustness}. As these models are now often used in practice, we assess to which extent can these models be debiased while retaining high accuracy. We conduct 768 experiments on 16 deep neural networks architectures, pretrained on either ILSVRC2012, ImageNet-21k (a superset of ILSVRC2012 that contains 21k classes \citep{deng2009imagenet}), or JFT-300M (a proprietary dataset with 300M examples and 18k classes \citep{sun2017revisiting}). The 16 architectures are listed in Appendix \ref{appendix::scale_setup} and include MobileNet \citep{howard2017mobilenets}, DenseNet \citep{huang2017densely}, Big Trasnfer (BiT) models \citep{kolesnikov2019big}, and NASNetMobile \citep{zoph2018learning}. The classification tasks contain seven attribute prediction tasks in CelebA \citep{liu2015faceattributes} as well as five classification tasks based on the COCO dataset \citep{DBLP:journals/corr/LinMBHPRDZ14}. We describe how the tasks were selected in Appendix \ref{appendix::scale_setup}. The sensitive attribute is always sex in our experiments and all classification tasks are binary. Unless explicitly stated, we use $\epsilon=0$. Moreover, in the COCO dataset, we follow the procedure of \citep{wang2020revise} in inferring the sensitive attribute based on the image caption: we use images that contain either the word \lq\lq woman" or the word \lq\lq man" in their captions but not both.

In every task, we build a linear classifier on top of the pretrained features. Inspired by the HyperRule in \citep{kolesnikov2019big}, we train for 50 epochs with an initial learning rate of 0.003, which is dropped by factor of 10 after 20, 30, and 40 epochs. All images are resized to $224\times 224$. For augmentation, we use random horizontal flipping and cropping, where we increase the dimension of the image to $248\times 248$ before cropping an image of size $224\times 224$ at random.

\paragraph{Scaling up the Model Size.}First, we examine the impact of over-parameterization in pretrained models on the effectiveness of the proposed post-processing algorithm. We fix the upstream dataset to ILSVRC2012 (8 models in total, cf. Appendix \ref{appendix::scale_setup}) and aggregate the test error rates across tasks by placing them on a common scale using \emph{soft ranking}. Specifically, we rescale all error rates in a given task linearly, so that the best error achieved is zero while the largest error is one. After that, we average the performance of each model across all tasks.  Aggregated results are given in Figure \ref{fig:model_size_effect}. The impact of the proposed algorithm on test errors improves by scaling up the size of pretrained models.

\paragraph{Scaling up the Data.}Second, we look into the impact of the size of the upstream data. We take the four BiT models ResNet50x1, ResNet50x3, ResNet101x1 and ResNet101x3, each is pretrained on either ILSVRC2012, ImageNet-21k, or JFT-300K \citep{kolesnikov2019big}. For each model and every downstream task, we rank the upstream datasets according to the test error on the downstream task and report the average ranking. Figure \ref{fig:data_size_effect} shows that pretraining each model on JFT-300M yields the best test accuracy when it is debiased using the proposed algorithm. To ensure that the improvement is not solely due to the data collection process, we pretrain ResNet50 on subsets of ImageNet-21k before fine-tuning on the 12 downstream tasks. Figure \ref{fig:data_size_effect2} shows, again, that the impact of the proposed post-processing rule on test errors improve when pretraining on large datasets.

\begin{figure}[t]
    \centering
    \includegraphics[width=0.24\columnwidth]{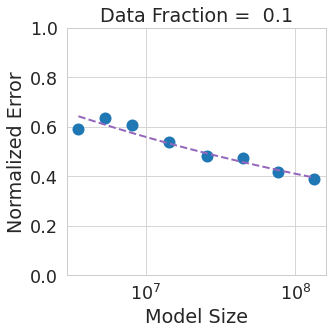}
    \includegraphics[width=0.24\columnwidth]{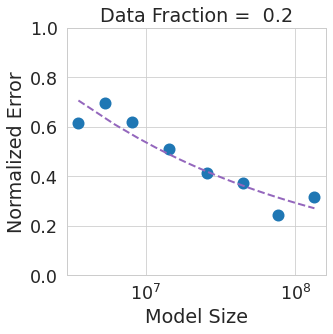}
    \includegraphics[width=0.24\columnwidth]{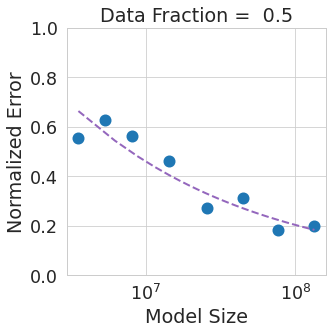}
    \includegraphics[width=0.24\columnwidth]{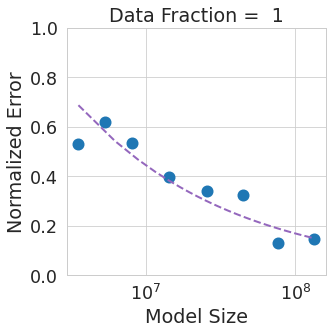}
    \caption{Aggregated performance of debiased DNN models pretrained on ILSVRC2012 across 12 classification tasks in CelebA and COCO (see Appendix \ref{appendix::scale_setup}). The $x$-axis is the number of model parameters while the $y$-axis is the aggregated error rate across all tasks after normalization (see Section \ref{sect::experiments}). Figures from left to right use 10\%, 20\%, 50\%, \& 100\% of downstream data, respectively.}
    \label{fig:model_size_effect}
\end{figure}
\begin{figure}[t]
    \centering
    \includegraphics[width=0.24\columnwidth]{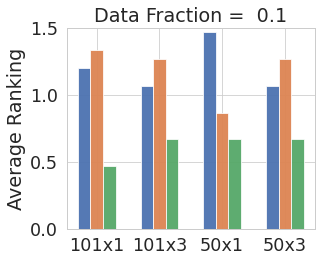}
    \includegraphics[width=0.24\columnwidth]{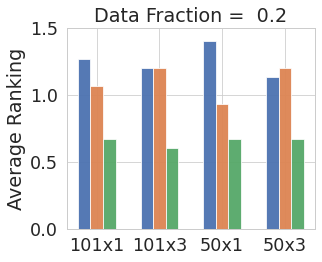}
    \includegraphics[width=0.24\columnwidth]{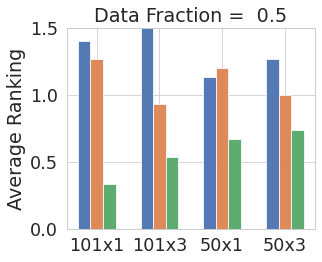}
    \includegraphics[width=0.24\columnwidth]{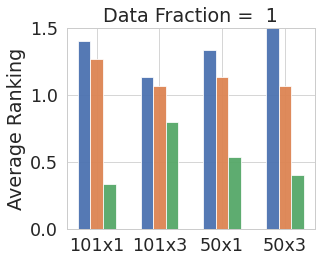}
    \caption{Aggregated performance of debiased Big Transfer (BiT) models pretrained on ILSVRC2012 (blue), ImageNet-21k (orange), or JFT-300M (green). The y-axis is the average ranking of each upstream dataset (lower is better) according to the test error rate on each of the 12 downstream classification tasks in Appendix \ref{appendix::scale_setup}. Figures from left to right use 10\%, 20\%, 50\%, \& 100\% of downstream data, respectively. In all models, pretraining on JFT-300K yields the best performance.}
    \label{fig:data_size_effect}
\end{figure}
\begin{figure}[h]
    \centering
    \includegraphics[width=0.24\columnwidth, height=3.1cm]{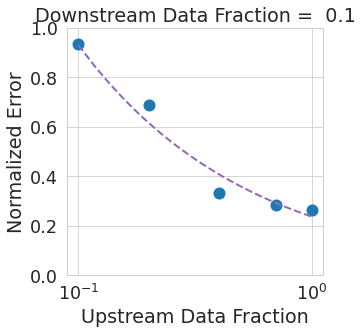}
    \includegraphics[width=0.24\columnwidth, height=3.1cm]{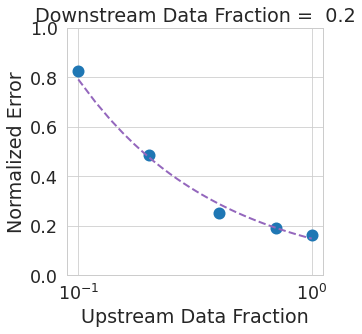}
    \includegraphics[width=0.24\columnwidth, height=3.1cm]{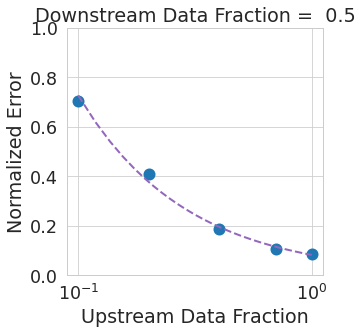}
    \includegraphics[width=0.24\columnwidth, height=3.1cm]{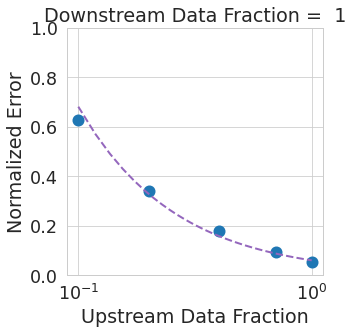}
    \caption{Aggregated performance of debiasing ResNet50 when pretrained on subsets of ImageNet-21k across 12 classification tasks in CelebA and COCO (see Appendix \ref{appendix::scale_setup}). The $x$-axis is the fraction of ImageNet-21k used during pretraining while the $y$-axis follows the approach in Figure \ref{fig:model_size_effect}. Figures from left to right use 10\%, 20\%, 50\%, \& 100\% of downstream data, respectively. The impact of the proposed post-processing algorithm on test errors improves when pretraining on large datasets.}
    \label{fig:data_size_effect2}
\end{figure}

\section{Conclusion}\label{sect::conclusion}
The post-processing approach in fair classification enjoys many advantages. It can be applied to any classification algorithm and does not require retraining. In addition, it is sometimes the \emph{only} option available, such as when using machine learning as a service with out-of-the-box predictive models \citep{obermeyer2019dissecting} or due to other constraints in data and computation \citep{yang2020towards}. 

In this paper, we propose a near-optimal scalable post-processing algorithm for debiasing trained models according to statistical parity.  In addition to its strong theoretical guarantees, we show that it outperforms previous post-processing methods on standard benchmark datasets across classical and modern machine learning models, and performs  favorably with even in-processing methods. Finally, we show that the algorithm is particularly effective for models trained at scale, in which heavily overparameterized models are pretrained on large datasets before fine-tuning on the downstream task.

\section*{Acknowledgement}
The authors are grateful to Lucas Dixon, Daniel Keysers, Ben Zevenbergen, Philippe Gervais, Mike Mozer and Olivier Bousquet for the valuable comments and discussions.

\section*{Funding Disclosure}
This work was performed at and funded by Google. The authors declare that there is no conflict of interest.

\bibliographystyle{abbrvnat}  
\bibliography{fairness}
\clearpage
\appendix

\section{Proof of Theorem \ref{theorem::correctness}}\label{appendix:proof_correctness}
\subsection{Proof of Correctness on the Training Sample}
Here we repeat the setup from Section \ref{sect::fair_via_unconstrainted} for completeness.

Suppose we have a binary classifier on the instance space $\mathcal{X}$. Let $f:\mathcal{X}\to[-1,+1]$ be its scores as described in Section \ref{sect::fair_via_unconstrainted}. We would like to construct an algorithm for post-processing the predictions made by that classifier such that we control the bias with respect to a set of pairwise disjoint groups $X_1,\ldots, X_K\subseteq\mathcal{X}$ according to Definition \ref{def::conditional_convar}. 
We assume that the output of the classifier $f:\mathcal{X}\to[-1,\,+1]$ is an estimate to $2\eta(x)-1$, where $\eta(x)=p(\textbf{y}=1|\tbx=x)$ is the Bayes regressor. This is not a strong assumption because many algorithms can be calibrated to provide probability scores  \citep{platt1999probabilistic, guo2017calibration} so the assumption is valid.
We consider randomized rules $h(x)$ that post-process the original classifier's output $f(x)$ according to the sensitive attribute $s(x)$.
Because randomization is sometimes necessary as demonstrated in Example \ref{example::randomization}, $h(x)$ is the probability of predicting the positive class when the instance is $x\in\mathcal{X}$.

In a finite training sample of size $N$, which we will denote by $\mathcal{S}=\{(x_1,y_1),\ldots,(x_N,y_N)\}$, let $S_k=S\cap X_k$. 
For each group $X_k\subseteq\mathcal{S}$, the fairness constraint in Definition \ref{def::conditional_convar} over the training sample can be written as:
\begin{equation*}
\frac{1}{|S_k|}\,\Big|\sum_ {x_i\in S_k}  (h(x_i) -\rho)\Big|\;\le\;\frac{\epsilon}{2},
\end{equation*}
for some hyper-parameter $\rho>0$. Precisely, if the optimization variables $h(x_i)$ satisfy the constraint (\ref{eq:constraint_rho}), then Definition \ref{def::conditional_convar} holds by the triangle inequality. Conversely, if Definition \ref{def::conditional_convar} holds, then the constraint (\ref{eq:constraint_rho}) also holds where:
\begin{equation*}
  2\rho = \max_{k\in[K]} \mathbb{E}_{\textbf{x}}[h(\textbf{x})\,|\,\textbf{x}\in X_k] \;+\; \min_{k\in[K]} \mathbb{E}_{\textbf{x}}[h(\textbf{x})\,|\,\textbf{x}\in X_k].
\end{equation*}

To learn $h$, we propose solving the following \emph{regularized} optimization problem:
\begin{align}\label{append::eq::final_lp}
\nonumber&\min_{0\le h(x)\le 1} \quad &&\sum_{x_i\in\mathcal{S}} (\gamma/2)\,h(x_i)^2\,-\,f({x}_i)\,h(x_i) \\
    &\text{s.t.} &&\forall k\in[K]: \big|\sum_ {x_i\in S_k}  (h(x_i)-\rho)\big| \;\le\;\epsilon_k,
\end{align}
where $\gamma>0$ is a regularization parameter and $\epsilon_k = |S_k|\,\epsilon/2$ for all $k\in[K]$.

Because the groups $X_k$ are pairwise disjoint, the optimization problem in (\ref{append::eq::final_lp}) decomposes into $K$ separate suboptimization problems, one for each group $X_k$. Each sub-optimization problem can be written in the following general form, where $q_i=h(x_i)$:
\begin{align*}
    &\min_{0\le q_i\le 1} &&\sum_{i=1}^M\, \frac{\gamma}{2}q_i^2-f(x_i)q_i\\
    &\text{s.t.} &&\sum_{i=1}^M (z_iq_i-b)\le \epsilon', \quad \quad -\sum_{i=1}^M (z_iq_i-b)\le \epsilon',
\end{align*}
for some values of $M\in\mathbb{N}$ and $z_i,b\in\mathbb{R}$, where $\epsilon' = M\epsilon/2$. 

\paragraph{Note:}We introduce new symbols $M, z_i$ and $b$ to keep the subsequent analysis general. For (\ref{append::eq::final_lp}), in particular, $M$ would correspond to the size of the group $S_k$, $z_i=1$, and $b=\rho$. Later in Appendix \ref{sect::appendix_non_binary_s}, we show that another criterion of bias falls into this general form so the same analysis applies over there as well.

The Lagrangian of the convex optimization problem is:
\begin{align*}
    &L(q,\alpha,\beta,\lambda,\mu) =\sum_{i=1}^M\,\big( \frac{\gamma}{2}q_i^2-f(x_i)q_i\big)\\ &+ \lambda(\sum_{i=1}^M (z_iq_i-b)-\epsilon')-\mu(\sum_{i=1}^M (z_iq_i-b)+\epsilon')+\sum_{i=1}^M \alpha_i(q_i-1)-\sum_{i=1}^M\beta_iq_i.
\end{align*}
Taking the derivative w.r.t. $q_i$ gives us:
\begin{equation*}
    q_i = \frac{1}{\gamma}\Big(f(x_i)-(\lambda -\mu) z_i-\alpha_i+\beta_i\Big).
\end{equation*}
Plugging this back, the dual problem becomes:
\begin{align*}
    &\min_{q,\lambda,\mu,\alpha,
    \beta} &&\sum_{i=1}^M \big(\frac{\gamma}{2}q_i^2+b(\lambda-\mu)\big) +(\lambda+\mu)\epsilon'+\sum_{i=1}^M\alpha_i\\
    &\text{s.t.} &&q_i = \frac{1}{\gamma}\Big(f(x_i)-(\lambda -\mu) z_i-\alpha_i+\beta_i\Big)\\
    & &&\lambda,\mu,\alpha_i,\beta_i\ge 0.
\end{align*}
Next, we eliminate variables. By eliminating $\beta_i$, we have:
\begin{align*}
    &\min_{q,\lambda,\mu,\alpha,
    \beta} &&\sum_{i=1}^M\big(\frac{\gamma}{2}q_i^2+b(\lambda-\mu)\big) +(\lambda+\mu)\epsilon'+\sum_{i=1}^M\alpha_i\\
    &\text{s.t.} &&q_i - \frac{1}{\gamma}\Big(f(x_i)-(\lambda -\mu) z_i-\alpha_i\Big)\ge 0\\
    & &&\lambda,\mu,\alpha_i\ge 0.
\end{align*}
Equivalently:
\begin{align*}
    &\min_{q,\lambda,\mu,\alpha,
    \beta} &&\sum_{i=1}^M \big(\frac{\gamma}{2}q_i^2+b(\lambda-\mu)\big) +(\lambda+\mu)\epsilon'+\sum_{i=1}^M\alpha_i\\
    &\text{s.t.} &&\alpha_i\ge f(x_i)-\gamma q_i -(\lambda -\mu) z_i\\
    & &&\lambda,\mu,\alpha_i\ge 0.
\end{align*}
Next, we eliminate $\alpha_i$ to obtain:
\begin{align*}
    &\min_{q,\lambda,\mu} &&\sum_{i=1}^M \big(\frac{\gamma}{2}q_i^2+b(\lambda-\mu)\big) +(\lambda+\mu)\epsilon' +\sum_{i=1}^M\big[f(x_i)-\gamma q_i -(\lambda -\mu) z_i\big]^+\\
    &{s.t.} &&\lambda,\mu\ge 0.
\end{align*}

Finally, we eliminate the $q_i$ variables. For a given optimal $\mu$ and $\lambda$, it is straightforward to observe that the minimizer $q^\star$ to $\gamma/2q^2+[w-\gamma q]^+$ must lie in the set $\{0, w/\gamma, 1\}$. In particular, if $w/\gamma\le 0$, then $q^\star=0$. If $w/\gamma\ge 1$, then $q^\star=1$. Note here that we make use of the fact that $\gamma>0$.

So, the optimal value of $q^\star$ to $\gamma/2q^2+[w-\gamma q]^+$ is:
\begin{equation*}
    \xi_\gamma(w) = \begin{cases}
    0& \frac{w}{\gamma}\le 0\\
    \frac{w^2}{2\gamma}& 0 \le \frac{w}{\gamma}\le 1\\
    w-\frac{\gamma}{2}& \frac{w}{\gamma}\ge 1
    \end{cases}
\end{equation*}

From this, the optimization problem reduces to:
\begin{equation}\label{eq::general_form_unconstrained_opt}
    \min_{\lambda,\mu\ge 0} \;\;\sum_{i=1}^M \Big(b(\lambda-\mu)+\epsilon'(\lambda+\mu)+\xi_\gamma(f(x_i)-(\lambda-\mu)z_i)\Big).
\end{equation}
This is a differentiable objective function and can be solved quickly using the projected gradient descent method \citep{boyd2008stochastic}. The projection step here is taking the positive parts of $\lambda$ and $\mu$. This leads to the update rules in Algorithm \ref{algorithm}.

Finally, given $\lambda$ and $\mu$, the solution of $q_i$ is a minimizer to;
\begin{equation*}
    \frac{\gamma}{2}q_i^2 +\big[f(x_i)-\gamma q_i -(\lambda -\mu) z_i\big]^+.
\end{equation*}
This solution is given by Equation (\ref{eq:prediction_rule}). 
So, we have a ramp function. In the proposed algorithm, we have $z_i=1$ and $b=\rho$ for all examples. The Robbins and Monro conditions on the learning rate schedule guarantee convergence to the optimal solution \citep{robbins1951stochastic}. This proves Theorem \ref{theorem::correctness}.

\subsection{Generalization to Test Data}
The previous section establishes the correctness of the proposed algorithm on the training sample. Therefore, upon termination, one has for every subpopulation $X_k$ with $S_k\doteq\mathcal{S}\cap X_k$:
\begin{equation*}
    \frac{1}{S_k}\big|\sum_{x_i\in S_k}h(x_i)-\rho\big| \le \frac{\epsilon}{2}.
\end{equation*}
This guarantee holds on the training sample. However, since the decision rule $h(x)$ is a  ramp function of the form shown in Figure \ref{fig:fairness_Q}(a), which is learned according to a fresh training sample of size $N$, the bias guarantee generalizes to test data as well as shown next. 

First, let $\mathcal{\hat R}(\mathcal{H}_\gamma)$ be the conditional Rademacher complexity of the hypothesis class that comprises of functions $h_\gamma:\mathbb{R}\to[0,\,1]$ of the form:
\begin{equation*}
h_\gamma(z) = \begin{cases} 0 &z\le b\\ (1/\gamma)(z-b) &b< z< b+\gamma\\
1 & z\ge b+\gamma
\end{cases},
\end{equation*}
which are depicted in Figure \ref{fig:fairness_Q}(a). We show that $\mathcal{\hat R}(\mathcal{H}_\gamma)$ is bounded by the conditional Rademacher complexity of 0-1 thresholding rules over the real line $\mathbb{R}$. By definition, for a fixed training sample $\{z_1,\ldots,z_N\}$ \citep{bousquet2003introduction}:
\begin{equation}
    \mathcal{\hat R}(\mathcal{H}_\gamma) = \mathbb{E}_{\sigma} \sup_{h\in\mathcal{H}_\gamma} \frac{1}{N}\sum_{i=1}^N \sigma_i h(z_i).
\end{equation}
Given fixed instances of the Rademacher random variables $\sigma_i\in\{-1, +1\}$, let $h^\star_\sigma(z)$ be the function that achieves the supremum inside the expectation. We note that if $\sum_i \sigma_i\mathbb{I}\{0<h^\star_\sigma(z_i)<1\}>0$, then the 0-1 thresholding rule $h'(z) = \mathbb{I}\{z\ge b\}$ satisfies:
\begin{equation*}
    \frac{1}{N}\sum_{i=1}^N \sigma_i h'(z_i) \ge \frac{1}{N}\sum_{i=1}^N \sigma_i h^\star_\sigma(z_i).
\end{equation*}
Conversely, if $\sum_i \sigma_i\mathbb{I}\{0<h^\star_\sigma(z_i)<1\}\le0$, then the 0-1 thresholding rule $h'(z) = \mathbb{I}\{z\ge b+\gamma\}$ satisfies the above inequality. 
This shows that the conditional Rademacher complexity of 0-1 thresholding rules is, at least, as large as the conditional Rademacher complexity of $\mathcal{H}_\epsilon$. However, by classical counting results that relate the Rademacher complexity to the VC dimension, we conclude:
\begin{equation*}
    \mathcal{R}_n(\mathcal{H}_\gamma) \le 2\sqrt{\frac{2\log \frac{en}{2}}{N}},
\end{equation*}
because the VC dimension of the 0-1 thresholding rules over the real line is 2. Thus, for any fixed subpopulation $X_k$, one has with a probability of at least $1-\delta$:
\begin{equation*}
\big|\mathbb{E}[h(\textbf{x})\,|\,\textbf{x}\in X_k] - \rho\big| \le \frac{\epsilon}{2} +  2\mathcal{R}_n(\mathcal{H}_\gamma) + \sqrt{\frac{\log \frac{2}{\delta}}{N}} \le \frac{\epsilon}{2} + 4 \sqrt{\frac{2\log \frac{en}{2}}{N}} + \sqrt{\frac{\log \frac{2}{\delta}}{N}}.
\end{equation*}
In addition, by the union bound, we have with a probability of at least $1-\delta$, the following inequalities all hold simultaneously:
\begin{equation*}
\forall k\in[K]\;:\;\big|\mathbb{E}[h(\textbf{x})\,|\,\textbf{x}\in X_k] - \rho\big| \le  \frac{\epsilon}{2} + 4 \sqrt{\frac{2\log \frac{en}{2}}{N}} + \sqrt{\frac{\log \frac{2K}{\delta}}{N}}.
\end{equation*}
Hence, with a probability of at least $1-\delta$:
\begin{equation}
    \max_{k\in[K]} \mathbb{E}[h(\textbf{x}\,|\,\textbf{x}\in X_k]\;-\;\min_{k\in[K]} \mathbb{E}[h(\textbf{x}\,|\,\textbf{x}\in X_k] \le \epsilon + 8 \sqrt{\frac{2\log \frac{en}{2}}{N}} + 2 \sqrt{\frac{\log \frac{2K}{\delta}}{N}}.
\end{equation}
\section{Proof of Theorem \ref{prop::excess_risk}}\label{appendix:proof_excess_risk}
\subsection{Optimal Unbiased Predictors}
We begin by proving the following result, which can be of independent interest. 
\begin{theorem}\label{prop::decision_rules}
Let $f^\star = \arg\min_{f:\mathcal{X}\to\{0,1\}}\mathbb{E}[ \mathbb{I}\{f(\tbx)\neq \textbf{y}\}]$ be the Bayes optimal decision rule subject to group-wise affine constraints of the form $\mathbb{E}[w_k(\tbx)\cdot f(\tbx)\,|\,\tbx\in X_k] = b_k$ for some fixed partition $\mathcal{X}=\cup_k X_k$. If $w_k:\mathcal{X}\to\mathbb{R}$ and $b_k\in\mathbb{R}$ are such that there exists a constant $c\in(0,1)$ in which $p(f(x)=1)=c$ will satisfy all the affine constraints, then $f^\star$ satisfies $p(f^\star(x)=1) = \mathbb{I}\{\eta(x)> t_k\} + \tau_k\, \mathbb{I}\{\eta(x)= t_k\}$, where $\eta(x)=p(\textbf{y}=1|\tbx=x)$ is the Bayes regressor, $t_k\in[0,1]$ is a threshold specific to the group $X_k\subseteq\mathcal{X}$, and $\tau_k\in[0,1]$.
\end{theorem}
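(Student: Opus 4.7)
The plan is to recast the constrained Bayes-risk minimization as an infinite-dimensional linear program on randomized rules $f:\mathcal{X}\to[0,1]$ and extract the threshold form by pointwise Lagrangian optimization. First I would rewrite the objective: since
$$\mathbb{E}[\mathbb{I}\{f(\tbx)\neq\textbf{y}\}] = \mathbb{E}[\eta(\tbx)(1-f(\tbx))+(1-\eta(\tbx))f(\tbx)] = \text{const} + \mathbb{E}[(1-2\eta(\tbx))f(\tbx)],$$
minimizing the misclassification rate subject to the affine constraints is equivalent to minimizing $\mathbb{E}[(1-2\eta(\tbx))f(\tbx)]$ over $f:\mathcal{X}\to[0,1]$ subject to $\mathbb{E}[w_k(\tbx)f(\tbx)\mid\tbx\in X_k]=b_k$ for every $k$. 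Relaxing to $f(x)\in[0,1]$ is without loss because the optimum is attained (or approached) by randomized rules, and both objective and constraints are linear in $f$.

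Next I would dualize. Introduce multipliers $\lambda_k\in\mathbb{R}$ for each equality constraint; the hypothesis that some constant $f\equiv c\in(0,1)$ is feasible is a Slater-type interior point in the feasible region, which yields strong duality for this LP. Using $\mathbb{E}[g(\tbx)\mid\tbx\in X_k]=\mathbb{E}[g(\tbx)\mathbb{I}\{\tbx\in X_k\}]/p(X_k)$, the Lagrangian collects into
$$L(f,\lambda) = \mathbb{E}\bigl[f(\tbx)\,\Phi(\tbx;\lambda)\bigr] - \sum_k \lambda_k b_k, \qquad \Phi(x;\lambda) := (1-2\eta(x)) + \sum_k \tfrac{\lambda_k}{p(X_k)}\,w_k(x)\,\mathbb{I}\{x\in X_k\}.$$

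The key step is pointwise minimization. Because the integrand factors against $p(x)$ and $f(x)\in[0,1]$, the minimizer satisfies $f^\star(x)=1$ when $\Phi(x;\lambda^\star)<0$, $f^\star(x)=0$ when $\Phi(x;\lambda^\star)>0$, and $f^\star(x)$ is free in $[0,1]$ on the tie set $\{\Phi(x;\lambda^\star)=0\}$. For $x\in X_k$ with $w_k$ constant on $X_k$ (the setting the theorem's conclusion describes, which covers the statistical parity case $w_k\equiv1$ used in the paper), this reduces to comparing $\eta(x)$ to the group-specific threshold $t_k := \tfrac{1}{2}\bigl(1+\lambda_k^\star w_k/p(X_k)\bigr)$, yielding $f^\star(x)=\mathbb{I}\{\eta(x)>t_k\}$ off the tie set. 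On the tie set $\{\eta(x)=t_k\}$ I would pick a single randomization constant $\tau_k\in[0,1]$ (the same value for all tied $x$ in $X_k$); since the map $\tau_k\mapsto \mathbb{E}[w_k(\tbx)f^\star(\tbx)\mid\tbx\in X_k]$ is continuous and monotone and the feasibility assumption ensures $b_k$ lies in its range, the intermediate value theorem supplies a valid $\tau_k$.

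The main obstacle is the clean threshold statement in $\eta$ alone: if $w_k$ genuinely varies over $X_k$, the Lagrangian only gives thresholding in the combined quantity $\eta(x)-\tfrac{\lambda_k^\star}{2p(X_k)}w_k(x)$, so the conclusion really needs $w_k$ to be (essentially) constant on each group, which is the operative case in the paper. A secondary technicality is the tie set: if $p(\eta(\tbx)=t_k\mid\tbx\in X_k)=0$ then $\tau_k$ is irrelevant and $f^\star$ is deterministic, whereas if the tie set has positive mass the randomization parameter $\tau_k$ is both necessary and sufficient to enforce equality in the constraint — exactly the phenomenon Example~\ref{example::randomization} foreshadows.
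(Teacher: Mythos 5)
Your proposal is correct, but it takes a genuinely different route from the paper. You solve the unregularized problem directly as an infinite-dimensional LP over randomized rules: reduce the risk to $\mathbb{E}[(1-2\eta(\tbx))f(\tbx)]$, dualize using the constant feasible point $c\in(0,1)$ as a Slater point, minimize the Lagrangian pointwise to get the bang-bang/threshold structure, and handle the tie set $\{\eta(x)=t_k\}$ explicitly with a single randomization constant $\tau_k$ chosen by the intermediate value theorem. The paper instead works with the $\gamma$-regularized (strongly convex) objective $(\gamma/2)\mathbb{E}[f(\tbx)^2]-\mathbb{E}[f(\tbx)(2\eta(\tbx)-1)]$, runs the same variable-elimination dual derivation as in Appendix~\ref{appendix:proof_correctness} to obtain the ramp-shaped solution $f(x)=\min\{1,[(g(x)-\mu^\star w(x))/\gamma]^+\}$, and then asserts the theorem "by taking the limit as $\gamma\to 0^+$." Your argument buys two things: it makes the tie-set randomization $\tau_k$ an explicit and justified part of the optimal rule (the paper's limit step never produces or justifies $\tau_k$, and the convergence of regularized minimizers to an unregularized one is left implicit), and it is more elementary, being a Neyman--Pearson-style pointwise argument. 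What the paper's route buys is structural: the ramp function with width $\gamma$ is exactly the rule Algorithm~\ref{algorithm} outputs, so the regularized derivation doubles as motivation for the method and shows the Bayes thresholding rule as its $\gamma\to0^+$ degeneration. Finally, your caveat about non-constant $w_k$ is well taken and applies equally to the paper: its own proof yields thresholding in $2\eta(x)-1-\mu^\star w(x)$, so the stated conclusion (a threshold on $\eta$ alone, group-specific $t_k$) is literally valid only when $w_k$ is essentially constant on $X_k$, which is the statistical-parity case ($w_k\equiv 1$) actually used.
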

\begin{proof}
Minimizing the expected misclassification error rate of a classifier $f$ is equivalent to maximizing:
\begin{align*}
    \mathbb{E}[ f(\tbx)\cdot \textbf{y}+(1- f(\tbx))\cdot (1-\textbf{y})] &=\mathbb{E}\Big[\mathbb{E}_{\tbx}[ f(\tbx)\cdot \textbf{y}+(1- f(\tbx))\cdot (1-\textbf{y})]\,\big|\,\tbx\Big]\\
    &=\mathbb{E}\Big[\mathbb{E}_{\tbx}[ f(\tbx)\cdot (2\eta(\tbx)-1)]\,\big|\,\tbx\Big] + \mathbb{E}[1-\eta(\tbx)].
\end{align*}
Hence, selecting $f$ that minimizes the misclassification error rate is equivalent to maximizing:
\begin{equation}\label{eq::appendix_c_eqxldl}
    \mathbb{E}[f(\tbx)\cdot (2\eta(\tbx)-1)].
\end{equation}
Instead of maximizing this directly, we consider the regularized form first. Writing $g(x) = 2\eta(x)-1$, the optimization problem is:
\begin{align*}
    &\min_{0\le  f(x)\le 1} \;\; &&(\gamma/2) \mathbb{E}[f(\tbx)^2] - \mathbb{E}[ f(\tbx)\cdot g(\tbx)]\\ &\text{s.t.} &&\mathbb{E}[w(\tbx)\cdot f(\tbx)] = b
\end{align*}
Here, we focused on one subset $X_k$ because the optimization problem decomposes into $K$ separate optimization problems, one for each $X_k$. If there exists a constant $c\in(0,1)$ such that $f(x)=c$ satisfies all the equality constraints, then Slater's condition holds so strong duality holds \citep{boyd2004convex}. Note that in the case of fair classification, this is always the case because having a fixed $f(x)=c$ yields a predictor that is independent of the instances so the fairness constraints are satisfied.

The Lagrangian is:
\begin{align*}
    &(\gamma/2) \mathbb{E}[f(\tbx)^2] - \mathbb{E}[ f(\tbx)\cdot g(\tbx)] + \mu(\mathbb{E}[w(\tbx)\cdot f(\tbx)]-b) + \mathbb{E}[ \alpha(\tbx)(f(\tbx)-1)] - \mathbb{E}[\beta(\tbx)f(\tbx)],
\end{align*}
where $\alpha(x),\beta(x)\ge 0$ and $\mu\in\mathbb{R}$ are the dual variables. 

Taking the derivative w.r.t. the optimization variable $f(x)$ yields:
\begin{equation}\label{eq::proof_theorem1_fg}
    \gamma f(x)  = g(x)-\mu\,w(x) - \alpha(x) + \beta(x).
\end{equation}
Therefore, the dual problem becomes:
\begin{align*}
    &\max_{\alpha(x),\beta(x)\ge 0} &&-(2
    \gamma)^{-1}\, \mathbb{E}[(g(\tbx)-\mu\,w(\tbx) - \alpha(\tbx)+\beta(\tbx))^2] -b\mu - \mathbb{E}[\alpha(\tbx)].
\end{align*}

We use the substitution in Equation (\ref{eq::proof_theorem1_fg}) to rewrite it as:
\begin{align*}
    &\min_{\alpha(x),\beta(x)\ge 0} (\gamma/2)\, \mathbb{E}[f(\tbx)^2] +b\mu+ \mathbb{E}[\alpha(\tbx)]\\
    &\text{s.t.} \forall x\in\mathcal{X}: \gamma f(x)  = g(x)-\mu\,w(x) - \alpha(x) + \beta(x).
\end{align*}
Next, we eliminate the multiplier $\beta(x)$ by replacing the equality constraint with an inequality:
\begin{align*}
    &\min_{\alpha(x)\ge 0} (\gamma/2)\, \mathbb{E}[f(\tbx)^2] +b\mu + \mathbb{E}[\alpha(\tbx)]\\
    &\text{s.t.} \forall x\in\mathcal{X}: g(x)-\gamma f(x)-\mu\,w(x) - \alpha(x) \le 0.
\end{align*}
Finally, since $\alpha(x)\ge 0$ and $\alpha(x)\ge g(x)-\gamma f(x)-\mu w(x)$, the optimal solution is the minimizer to:
\begin{align*}
    &\min_{f:\mathcal{X}\to\mathbb{R}}\;\;&&(\gamma/2)\mathbb{E}[f(\tbx)^2] +b\mu + \mathbb{E}[\max\{0,\,g(\tbx)-\gamma f(\tbx)-\mu w(\tbx)\}].
\end{align*}

Next, let $\mu^\star$ be the optimal solution of the dual variable $\mu$. Then, the optimization problem over $f$ decomposes into separate problems, one for each $x\in\mathcal{X}$. We have:
\begin{equation*}
    f(x) = \arg\min_{\tau\in\mathbb{R}}\Big\{ (\gamma/2)\tau^2 + [g(x)-\gamma \tau-\mu^\star\,w(x)]^+\Big\}.
\end{equation*}

Using the same argument in Appendix \ref{appendix:proof_correctness}, we deduce that $f(x)$ is of the form:
\begin{equation*}
    f(x) = \begin{cases}
        0,  & g(x)-\mu^\star\,w(x)\le 0\\
        1   & g(x)-\mu^\star\,w(x) \ge \gamma\\
        (1/\gamma)\,(g(x)-\mu^\star\,w(x)) &\text{otherwise}
    \end{cases}
\end{equation*}
Finally, the statement of the theorem holds by taking the limit as $\gamma\to 0^+$. 
\end{proof}

\subsection{Excess Risk Bound}
In this section, we write $\mathcal{D}$ to denote the underlying probability distribution and write $\mathcal{S}$ to denote the uniform distribution over the training sample (a.k.a. empirical distribution).

The parameter $\rho$ stated in the theorem is given by:
\begin{align*}
    \rho = (1/2)\,\big(
        &\max_{k\in[K]} \mathbb{E}_{\textbf{x}}[h^\star(\textbf{x})\,|\,\textbf{x}\in X_k] + \min_{k\in[K]} \mathbb{E}_{\textbf{x}}[h^\star(\textbf{x})\,|\,\textbf{x}\in X_k]
    \big).
\end{align*}
Note that, by definition, the optimal classifier $h^\star$ that satisfies $\epsilon$ statistical parity also satisfies the constraint in (\ref{eq::final_lp}) with this choice of $\rho$. Hence, with this choice of $\rho$, $h^\star$ remains optimal among all possible classifiers. 

Observe that the decision rule depends on $x$ only via $f(x)\in[-1,+1]$. Hence, we write $\textbf{z} = f(\tbx)$. Since the thresholds are learned based on a fresh sample of data, the random variables $\textbf{z}_i$ are i.i.d. In light of Equation \ref{eq::appendix_c_eqxldl}, we would like to minimize the expectation of the loss $l(h_\gamma, \tbx) = -f(\tbx)\cdot h_\gamma(\tbx) = -\textbf{z}\cdot q(\textbf{z})\doteq \zeta(\textbf{z})$ for some function $q: [-1, +1]\to [0, 1]$ of the form shown in \ref{fig:fairness_Q}(a). Note that $\zeta$ is  $2(1+1/
\gamma)$-Lipschitz continuous within the same group and sensitive class. This is because the thresholds are always in the interval $[-1-\gamma, 1+\gamma]$; otherwise moving beyond this interval would not change the decision rule.  

Let ${\textbf{h}}_\gamma$ be the decision rule learned by the algorithm. Using Corollary 5 in \citep{xu2012robustness}, we conclude that with a probability of at least $1-\delta$:
\begin{align}
   \nonumber&\big| \mathbb{E}_\mathcal{D}[l({\textbf{h}}_\gamma,\tbx)]-\mathbb{E}_\mathcal{S}[l({\textbf{h}}_\gamma,\tbx)] \big| \le \inf_{R\ge 1} \Big\{\big(\frac{4}{R}(1+\frac{1}{\gamma}\big) +2\sqrt{\frac{2(R+K)\log 2+2\log \frac{1}{\delta}}{N}} 
   \Big\}.
\end{align}
Here, we used the fact that the observations $f(\tbx)$ are bounded in the domain $[-1,1]$ and that we can first partition the domain into groups $X_k$ ($K$ subsets) in addition to partitioning the interval $[-1,1]$ into $R$ smaller sub-intervals and using the Lipschitz constant. Choosing $R=N^{\frac{1}{3}}$ and simplifying gives us with a probability of at least $1-\delta$:
\begin{align*}
    &\big| \mathbb{E}_\mathcal{D}[l({\textbf{h}}_\gamma,\tbx)]-\mathbb{E}_\mathcal{S}[l({\textbf{h}}_\gamma,\tbx)] \big| \le \frac{4(2+\frac{1}{\gamma})}{N^{\frac{1}{3}}} + 2 \sqrt{\frac{2K+2\log \frac{1}{\delta}}{N}}.
\end{align*}

Define ${\textbf{h}}^\star_\gamma$ to be the minimizer of:
\begin{equation*}
 (\gamma/2) \mathbb{E}[h(\tbx)^2] - \mathbb{E}[ h(\tbx)\cdot f(\tbx)],
\end{equation*}
subject to the fairness constraints. 
Then, the same generalization bound above also applies to the decision rule ${\textbf{h}}^\star_\gamma$ because the $\epsilon$-cover (Definition 1 in \citep{xu2012robustness}) is independent of the choice of the thresholds. By the union bound, we have with a probability of at least $1-\delta$, \emph{both} of the following inequalities hold:
\begin{align}\label{eq::proof_consistency_first_bound}
    \big| \mathbb{E}_\mathcal{D}[l({\textbf{h}}_\gamma,\tbx)]-\mathbb{E}_\mathcal{S}[l({\textbf{h}}_\gamma,\tbx)] \big|
    &\le \frac{4(2+\frac{1}{\gamma})}{N^{\frac{1}{3}}} +2 \sqrt{\frac{2K+2\log \frac{2}{\delta}}{N}}\\
    \label{eq::proof_consistency_second_bound}
    \big| \mathbb{E}_\mathcal{D}[l( {\textbf{h}}^\star_\gamma,\tbx)]-\mathbb{E}_\mathcal{S}[l({\textbf{h}}^\star_\gamma,\tbx)] \big|
    &\le \frac{4(2+\frac{1}{\gamma})}{N^{\frac{1}{3}}} +2 \sqrt{\frac{2K+2\log \frac{2}{\delta}}{N}}.
\end{align}

In particular:
\begin{align*}
    \mathbb{E}_\mathcal{D}[l({\textbf{h}}_\gamma,\tbx)] &\le \mathbb{E}_\mathcal{S}[l({\textbf{h}}_\gamma,\tbx)] + \frac{4(2+\frac{1}{\gamma})}{N^{\frac{1}{3}}} +2 \sqrt{\frac{2K+2\log \frac{2}{\delta}}{N}}\\
    &\le \mathbb{E}_\mathcal{S}[l( \textbf{h}^\star_\gamma,\tbx)] + \gamma + \frac{4(2+\frac{1}{\gamma})}{N^{\frac{1}{3}}} +2 \sqrt{\frac{2K+2\log \frac{2}{\delta}}{N}}\\
    &\le \mathbb{E}_\mathcal{D}[l(\textbf{h}^\star_\gamma,\tbx)] +  \gamma + \frac{8(2+\frac{1}{\gamma})}{N^{\frac{1}{3}}} +4 \sqrt{\frac{2K+2\log \frac{2}{\delta}}{N}}.
\end{align*}
The first inequality follows from Equation (\ref{eq::proof_consistency_first_bound}). The second inequality follows from the fact that ${\textbf{h}}_\gamma$ is an empirical risk minimizer to the regularized loss. The last inequality follows from Equation (\ref{eq::proof_consistency_second_bound}). 

Finally, we know that the thresholding rule $\textbf{h}^\star_\gamma$ with width $\gamma>0$ is, by definition, a minimizer to:
\begin{equation*}
    (\gamma/2) \mathbb{E}[h(\tbx)^2] - \mathbb{E}[ h(\tbx)\cdot f(\tbx)]
\end{equation*}
among all possible bounded functions $h:\mathcal{X}\to[0,1]$ subject to the desired fairness constraints. Therefore, we have:
\begin{align*}
    &(\gamma/2) \mathbb{E}[\textbf{h}^\star_\gamma(\tbx)^2] - \mathbb{E}[ \textbf{h}^\star_\gamma(\tbx)\cdot f(\tbx)] \le (\gamma/2) \mathbb{E}[\textbf{h}^\star(\tbx)^2] - \mathbb{E}[ \textbf{h}^\star(\tbx)\cdot f(\tbx)]
\end{align*}
Hence:
\begin{equation*}
    \mathbb{E}[l(\textbf{h}^\star_\gamma,\tbx)] = - \mathbb{E}[ \textbf{h}^\star_\gamma(\tbx)\cdot f(\tbx)] \le \gamma + \mathbb{E}[l(\textbf{h}^\star,\tbx)] 
\end{equation*}
This implies the desired bound:
\begin{align*}
    &\mathbb{E}_\mathcal{D}[l(\tilde{\textbf{h}}_\gamma,\tbx)] \le \mathbb{E}_\mathcal{D}[l(\textbf{h}^\star,\tbx)] +  2\gamma + \frac{8(2+\frac{1}{\gamma})}{N^{\frac{1}{3}}} +4 \sqrt{\frac{2K+2\log \frac{2}{\delta}}{N}}.
\end{align*}

Therefore, we have consistency if $N\to\infty$, $\gamma\to 0^+$ and $\gamma N^\frac{1}{3}\to\infty$. For example, this holds if $\gamma = O(N^{-\frac{1}{6}})$.

So far, we have assumed that the output of the original classifier coincides with the Bayes regressor. If the original classifier is Bayes consistent, i.e. $\mathbb{E}[|2\eta(\tbx)-1-f(\tbx)|]\to 0$ as $N\to\infty$, then we have Bayes consistency of the post-processing rule by the triangle inequality. 

\section{Proof of Proposition \ref{prop::subgradient_udpates}}\label{appendix:proof_convergence}
\begin{proof}
Since $|\xi_\gamma'(w)|\le 1$ (see Equation \ref{eq::xi_funct}), the derivative squared in the stochastic loss in Equation \ref{eq:FF} w.r.t. the optimization variable $\gamma$ at a point $\tbx$ is bounded by $(1+\rho+\epsilon)^2$ at all rounds. The same holds for the other optimization variable $\mu$. Therefore, the norm squared of the gradient w.r.t. $(\lambda, \mu)$ is bounded by $2(1+\rho+\epsilon)^2$. Following the proof steps of \citep{boyd2008stochastic} and using the fact that projections are contraction mappings, one obtains:
\begin{align*}
    \sum_{t=1}^T \big(\mathbb{E}[F^{(t)}] - F^\star\big) &\le \frac{||\mu^\star||_2^2 + ||\lambda^\star||_2^2+2(1+\rho+\epsilon)^2T\alpha^2}{2\alpha}\\
    &=(1+\rho+\epsilon)^2\alpha T + \frac{||\mu^\star||_2^2 + ||\lambda^\star||_2^2}{2\alpha}.
\end{align*}

Dividing both sides by $T$, we have by Jensen's inequality $\frac{1}{T}\sum_{t=1}^T \mathbb{E}[F^{(t)}] \ge \mathbb{E}[F(\bar\lambda, \bar\mu)]$. Plugging this into the earlier results yields:
\begin{align*}
    \mathbb{E}[\bar F] - F^\star &\le (1+\rho+\epsilon)^2\alpha + \frac{||\mu^\star||_2^2 + ||\lambda^\star||_2^2}{2T\alpha}.
\end{align*}

\end{proof}

\section{Extension to Other Criteria }\label{sect::appendix_non_binary_s}
\subsection{Controlling the Covariance}
The proposed algorithm can be adjusted to control bias according to other criteria as well besides statistical parity. For example, we demonstrate in this section how the proposed post-processing algorithm can be adjusted to control the \emph{covariance} between the classifier's prediction and the sensitive attribute when both are binary random variables.

Let $\textbf{a}, \textbf{b}, \textbf{c} \in \{0, 1\}$ be random variables. Let 
$C(\textbf{a}, \textbf{b}) \doteq \mathbb{E}[\textbf{a}\cdot\textbf{b}] - \mathbb{E}[\textbf{a}]\cdot \mathbb{E}[\textbf{b}]$ be their covariance, and $C(\textbf{a}, \textbf{b}\,|\,\textbf{c})$ their covariance conditioned on $\textbf{c}$:
\begin{equation}\label{eq::conditional_cov}
    C(\textbf{a}, \textbf{b}\;|\;\textbf{c}=c) = \mathbb{E}[\textbf{a}\cdot\textbf{b}\,|\,\textbf{c}=c] - \mathbb{E}[\textbf{a}\,|\,\textbf{c}=c]\cdot \mathbb{E}[\textbf{b}\,|\,\textbf{c}=c].
\end{equation}
Then, one possible criterion for measuring bias is to measure the conditional/unconditional covariance between the classifier's predictions and the sensitive attribute when both are binary random variables. Because the random variables are binary, it is straightforward to show that achieving zero covariance implies independence. Hence, this is equivalent to statistical parity when $\epsilon=0$. The advantage of this formulation, as will be shown next, is that it does not include a hyperparameter $\rho$. The disadvantage, however, is that it can only accommodate binary sensitive attributes. 

Suppose we have a binary classifier on the instance space $\mathcal{X}$. We would like to construct an algorithm for post-processing the predictions made by that classifier such that we guarantee $|C\big(f(\tbx),\,1_S(\tbx)\,|\,\tbx\in X_k\big)| \le \epsilon$, where $\mathcal{X}=\cup_k X_k$ is a total partition of the instance space. Informally, this states that the fairness guarantee with respect to the senstiive attribute $1_S:\mathcal{X}\to\{0,1\}$ holds within each subgroup $X_k$.

We assume, again, that the output of the classifier $f:\mathcal{X}\to[-1,\,+1]$ is an estimate to $2\eta(x)-1$, where $\eta(x)=p(\textbf{y}=1|\tbx=x)$ is the Bayes regressor and  consider randomized rules of the form:
\begin{equation*}
    h:\{0,1\}\times\{1,2,\ldots,K\}\times[-1,\,1]\to[0,1],
\end{equation*}
whose arguments are: (i) the sensitive attribute $1_S:\mathcal{X}\to\{0,1\}$ , (ii) the sub-group membership $k:\mathcal{X}\to[K]$, and (iii) the original classifier's score $f({x})$. Because randomization is sometimes necessary as proved in Section \ref{sect::analysis}, $h(x)$ is the probability of predicting the positive class when the instance is $x\in\mathcal{X}$.

Similar to before, if we have a training sample of size $N$, which we will denote by $\mathcal{S}$, we denote 
$S_k = \mathcal{S}\cap X_k$. The desired fairness constraint on the covariance can be written as:
\begin{equation*}
\frac{1}{|X_k|}\,\Big|\sum_ {x_i\in X_k} (1_S(x_i)-\rho_k)\, h(x_i) \Big|\;\le\;\epsilon,
\end{equation*}
where $\rho_k = \mathbb{E}_{\tbx}[1_S(\tbx)\,|\,\tbx\in X_k]$. This is because:
\begin{align*}
\frac{1}{|X_k|}\,\sum_ {x_i\in X_k} (1_S(x_i)-\rho_k)\, h(x_i) 
&= \frac{1}{|X_k|}\,\sum_ {x_i\in X_k} 1_S(x_i)\,  h(x_i) -\frac{\rho_k}{|X_k|}\sum_{x_i\in X_k}\,  h(x_i)\\
&=\mathbb{E}[1_S(\tbx)\cdot  h(\tbx)\,|\,\tbx\in X_k]
- \mathbb{E}[1_S(\tbx)|\,\tbx\in X_k]\cdot \mathbb{E}[h(\tbx)\,|\,\tbx\in X_k] \\
&= C(h(\tbx),\, 1_S(\tbx)\,|\,\tbx\in X_k),
\end{align*}
where the expectation is over the training sample. Therefore, in order to learn $h$, we solve the {regularized} optimization problem:
\begin{align}
\nonumber&\min_{0\le  h(x_i)\le 1} &&\sum_{i=1}^N (\gamma/2)\, h(x_i)^2\,-\,f({x}_i)\, h(x_i) 
    \\
    &\text{s.t.} &&\forall k\in[K]: \big|\sum_ {x_i\in X_k} (1_S(x_i)-\rho_k)\,  h(x_i)\big| \;\le\;\epsilon_k
\end{align}
where $\gamma>0$ is a regularization parameter and $\epsilon_k = |X_k|\,\epsilon$. This is of the same general form analyzed in Section \ref{appendix:proof_correctness}. Hence, the same algorithm can be applied with $b=0$ and $z_i=1_S(x_i)-\rho_k$. 

\subsection{Impossibility Result}
The previous algorithm for controlling covariance requires that the subgroups $X_k$ be known in advance. Indeed, our next impossibility result shows that this is, in general, necessary. In other words, a deterministic classifier $h:\mathcal{X}\to\{0,1\}$ cannot be universally unbiased with respect to a sensitive class $S$ across all possible known and unknown groups unless the representation $\tbx$ has zero mutual information with the sensitive attribute or if $h$ is constant almost everywhere. As a corollary, the groups $X_k$ have to be known \emph{in advance}.

\begin{proposition}[Impossibility result]\label{theorem::impossibility}
Let $\mathcal{X}$ be the instance space and $\mathcal{Y}=\{0,\,1\}$ be a target set. Let $1_S:\mathcal{X}\to\{0,1\}$ be an arbitrary (possibly randomized) binary-valued function on $\mathcal{X}$ and define $\gamma:\mathcal{X}\to[0,1]$ by $\gamma(x) = p(1_S(\tbx)=1\,|\,\tbx=x)$, 
where the probability is evaluated over the randomness of $1_S:\mathcal{X}\to\{0,1\}$. Write $\bar\gamma = \mathbb{E}_{\tbx}[\gamma(\tbx)]$.
Then, for any binary predictor $h:\mathcal{X}\to\{0,1\}$ it holds that
\begin{align}\label{eq::impossibility_theorem_main_eq}
\sup_{\pi:\, \mathcal{X}\to\{0,1\}}& \Big\{\mathbb{E}_{\pi(\tbx)}\, \big|\mathcal{C}\big(h(\textbf{\tbx}),\gamma(\tbx)|\;\pi(\tbx)\big)\big|\Big\} \ge\; \frac{1}{2}\;\mathbb{E}_{\tbx}|\gamma(\tbx)-\bar\gamma|\cdot \min\{\mathbb{E}f, 1-\mathbb{E} f\},
\end{align}
where $\mathcal{C}\big(f(\textbf{\tbx}),\gamma(\tbx)|\;\pi(\tbx)\big)$ is defined in Equation \ref{eq::conditional_cov}. 
\end{proposition}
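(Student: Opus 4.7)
My plan is to lower-bound the supremum by exhibiting a single, explicit partition $\pi^\star$ tailored to $h$ and $\gamma$, namely the \emph{agreement indicator}
\[
\pi^\star(x)\;=\;\mathbb{I}\bigl\{h(x)=\mathbb{I}\{\gamma(x)>\bar\gamma\}\bigr\}.
\]
This splits $\mathcal{X}$ into the four cells $A_{++},A_{+-},A_{-+},A_{--}$ determined by the signs of $h(\tbx)$ and $\gamma(\tbx)-\bar\gamma$, so that $\{\pi^\star=1\}=A_{++}\cup A_{--}$ (where $h$ and $\mathbb{I}\{\gamma>\bar\gamma\}$ agree) and $\{\pi^\star=0\}=A_{+-}\cup A_{-+}$. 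Let $a,b,c,d$ denote the probabilities of these cells, $p=a+b=\mathbb{E}h$, and $\alpha,\beta,\mu,\nu$ the conditional means of $\gamma$ inside them; set $\alpha':=\alpha-\bar\gamma$, $\mu':=\mu-\bar\gamma$, $\beta':=\bar\gamma-\beta$, $\nu':=\bar\gamma-\nu$, all nonnegative.

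Because $h$ is binary, the conditional covariance inside each group collapses to $\mathbb{E}[h|\cdot]\mathbb{E}[1-h|\cdot]\bigl(\mathbb{E}[\gamma|h=1,\cdot]-\mathbb{E}[\gamma|h=0,\cdot]\bigr)$, which gives
\[
\mathbb{E}_{\pi^\star(\tbx)}\bigl|\mathcal{C}(h,\gamma|\pi^\star)\bigr|\;=\;\tfrac{ad(\alpha-\nu)}{a+d}+\tfrac{bc(\mu-\beta)}{b+c}\;=:\;E^\star.
\]
A direct calculation also shows $\tfrac{1}{2}\mathbb{E}|\gamma-\bar\gamma|=W$ with $W:=a\alpha'+c\mu'=b\beta'+d\nu'$ (the two expressions coincide because $\mathbb{E}\gamma=\bar\gamma$), so the target becomes $E^\star\ge W\min\{p,1-p\}$. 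To prove it, I would fix $a,b,c,d,W$ and minimise $E^\star$ over the nonnegative deviations. Rewriting $E^\star=\tfrac{ad}{a+d}(\alpha'+\nu')+\tfrac{bc}{b+c}(\mu'+\beta')$ shows the functional is linear, and the two equality constraints decouple it into two one-dimensional linear programs, each on a line segment. Evaluating the corner values yields $\min E^\star=W\bigl(1-|u-v|\bigr)$ with $u:=\tfrac{d}{a+d}$ and $v:=\tfrac{b}{b+c}$, so the proposition reduces to the purely algebraic inequality $|u-v|\le\max\{p,1-p\}$, equivalently $|cd-ab|\le\max\{p,1-p\}(a+d)(b+c)$.

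The main obstacle is this last elementary but non-obvious inequality. I would dispatch it by case analysis on the sign of $cd-ab$ after the WLOG reduction $p\le 1/2$, so that $\max\{p,1-p\}=c+d$. Using $c+d=1-(a+b)$, the case $cd\ge ab$ reduces algebraically to the inequality $a(b+c)+b(a+d)\ge(a+b)(a+d)(b+c)$, which follows termwise from the trivial bounds $a+d\le 1$ and $b+c\le 1$. The symmetric identity $c(a+d)+d(b+c)=ac+bd+2cd$, combined with $a+b\le c+d$, handles $ab>cd$. The degenerate situations $p\in\{0,1\}$ or $\gamma\equiv\bar\gamma$ make both sides of the claimed inequality zero and so are trivial.
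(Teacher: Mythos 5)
Your proof is correct, and its crucial construction is the same as the paper's: your agreement indicator $\pi^\star(x)=\mathbb{I}\{h(x)=\mathbb{I}\{\gamma(x)>\bar\gamma\}\}$ is exactly the set $W$ on which the paper conditions (ties at $\gamma=\bar\gamma$ contribute nothing either way). Where you genuinely diverge is in how the resulting quantity is bounded. The paper never introduces the four cells: it uses the identity $\mathbb{E}_{p_X}[(\gamma-\bar\gamma)(h-\beta)]=\mathcal{C}(\gamma,h\,|\,X)+(\mathbb{E}_{p_X}\gamma-\bar\gamma)(\mathbb{E}_{p_X}h-\beta)$, picks $\beta=\bar f=\tfrac12(\mathbb{E}_{p_W}h+\mathbb{E}_{p_{\bar W}}h)$ so that the cross terms cancel when the two groups are recombined (because $\mathbb{E}\gamma=\bar\gamma$), and finishes with the averaging bounds $2\bar f\ge\mathbb{E}h$ and $2(1-\bar f)\ge1-\mathbb{E}h$. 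You instead collapse everything onto the four conditional means and solve two one-dimensional linear programs, arriving at $E^\star\ge W(1-|u-v|)$. The two routes in fact meet at this point: since $\mathbb{E}_{p_W}h=1-u$ and $\mathbb{E}_{p_{\bar W}}h=v$, the paper's intermediate bound $2W\min\{\bar f,1-\bar f\}$ equals your $W(1-|u-v|)$, and your closing inequality $|cd-ab|\le\max\{p,1-p\}(a+d)(b+c)$ is equivalent to the paper's one-liner (in your notation, $1-u+v=\tfrac{a}{a+d}+\tfrac{b}{b+c}\ge a+b$ and symmetrically $1+u-v\ge c+d$), so your sign-of-$(cd-ab)$ case analysis, while valid, can be compressed. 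Two minor points to tidy up: dropping the upper bounds on the deviations implied by $\gamma\in[0,1]$ is legitimate since you only want a lower bound on the minimum, and besides $p\in\{0,1\}$ and $\gamma\equiv\bar\gamma$ you should also dispose of the degenerate configurations where a group or cell has zero mass (e.g.\ $a+d=0$), since your formulas for $E^\star$, $u$, $v$ presuppose positive denominators; these cases are immediate because a zero-probability group contributes nothing to $\mathbb{E}_{\pi^\star}|\mathcal{C}|$. What your route buys is an explicit picture of the extremal configurations (all deviation mass concentrated in single cells); what the paper's buys is brevity and no cell bookkeeping.
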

\begin{proof}
Fix $0<\beta<1$ and consider the subset: 
\begin{align*}
W &= \{x\in \mathcal{X}:\;(\gamma(x)-\bar\gamma)\cdot (f(x)-\beta) > 0 \},
\end{align*}
and its complement $\bar W = \mathcal{X}\setminus W$. Since $f(x)\in\{0,1\}$, the sets $W$ and $\bar W$ are independent of $\beta$ as long as it remains in the open interval $(0,\,1)$. More precisely:
\begin{align*}
W &= \begin{cases} \gamma(x)-\bar \gamma>0 \quad\wedge\quad f(x)=1\\ \gamma(x)-\bar \gamma\le0 \quad\wedge\quad f(x)=0
\end{cases}
\end{align*}

Now, for any set $X\subseteq\mathcal{X}$, let $p_X$ be the projection of the probability measure $p(x)$ on the set $X$ (i.e. $p_X(x) = p(x)/p(X)$). Then, with a simple algebraic manipulation, one has the identity: 
\begin{align}\label{eq::first_identity}
\mathbb{E}_{\tbx\sim p_X}[(\gamma(\tbx)-\bar\gamma)\, (f(\tbx)-\beta)] = C(\gamma(\tbx), f(\tbx);\tbx\in X) + (\mathbb{E}_{\tbx\sim p_X}[\gamma]-\bar\gamma)\cdot (\mathbb{E}_{\tbx\sim p_X}[f]-\beta)
\end{align}
By definition of $W$, we have: 
\begin{align*}
\mathbb{E}_{\tbx\sim p_W}[(\gamma(\tbx)-\bar\gamma) (f(\tbx)-\beta)] &= \mathbb{E}_{\tbx\sim p_W}[|\gamma(\tbx)-\bar\gamma| |f(\tbx)-\beta|]\\
&\ge \min\{\beta, 1-\beta\} \mathbb{E}_{\tbx\sim p_W}|\gamma(\tbx)-\bar\gamma| 
\end{align*}

Combining this with Equation (\ref{eq::first_identity}), we have: 
\begin{align}\label{eq:c_lower_bound_1}
C(\gamma(\tbx), f(\tbx);\tbx\in W) \ge \min\{\beta,1-\beta\} \mathbb{E}_{\tbx\sim p_W}|\gamma(\tbx)-\bar\gamma| + (\mathbb{E}_{\tbx\sim p_W}[\gamma]-\bar\gamma) (\beta-\mathbb{E}_{\tbx\sim p_W}[f])
\end{align}

Since the set $W$ does not change when $\beta$ is varied in the open interval $(0,\,1)$, the lower bound holds for any value of $\beta\in(0,1)$. We set:
\begin{equation}\label{eq::beta_star}
\beta = \bar f \doteq \frac{1}{2} \big(\mathbb{E}_{\tbx\sim p_W}f(\tbx) \,+\, \mathbb{E}_{\tbx\sim p_{\bar W}}f(\tbx)\big)
\end{equation}
Substituting the last equation into Equation (\ref{eq:c_lower_bound_1}) gives the lower bound: 
\begin{align}
\nonumber &C(\gamma(\tbx), f(\tbx);\tbx\in W)
\ge\\&\quad\min\{\bar f, 1-\bar f\}\cdot \mathbb{E}_{\tbx\sim p_W}|\gamma(\tbx)-\bar\gamma| +\frac{1}{2}(\mathbb{E}_{\tbx\sim p_W}[\gamma]-\bar\gamma)\, \big(\mathbb{E}_{\tbx\sim p_W}f(\tbx)-\mathbb{E}_{\tbx\sim p_{\bar W}}f(\tbx)\big)
\end{align}
Repeating the same analysis for the subset $\bar W$, we arrive at the inequality: 
\begin{align}\nonumber
&C(\gamma(\tbx), f(\tbx);\tbx\in \bar W) \\\le &\quad-\min\{\bar f, 1-\bar f\}\, \mathbb{E}_{\tbx\sim p_{\bar W}}|\gamma(\tbx)-\bar\gamma|
+\frac{1}{2}(\mathbb{E}_{\tbx\sim p_{\bar W}}[\gamma]-\bar\gamma)\, \big(\mathbb{E}_{\tbx\sim p_W}f(\tbx)-\mathbb{E}_{\tbx\sim p_{\bar W}}f(\tbx)\big)
\end{align}
Writing $\pi(x) = 1_W(x)$, we have by the reverse triangle inequality: 
\begin{align}\label{eq::final_proof}
\mathbb{E}_{\pi(\tbx)}\, \big|\mathcal{C}\big(f(\textbf{\tbx}),\gamma(\tbx);\;\pi(\tbx)\big)\big| \ge \min\{\bar f, 1-\bar f\}\cdot \mathbb{E}_{\tbx}|\gamma(\tbx)-\bar\gamma|.
\end{align}
Finally: 
\begin{align*}
2\bar f \ge p(\tbx\in W)\cdot \mathbb{E}_{\tbx\sim p_W}f(\tbx)\,+\,p(\tbx\in \bar W)\cdot\mathbb{E}_{\tbx\sim p_{\bar W}}f(\tbx) = \mathbb{E}[f].
\end{align*}
Similarly, we have $2(1-\bar f) \ge 1-\mathbb{E}[f]$. Therefore:
\begin{equation*}
    \min\{\bar f,\,1-\bar f\} \ge \frac{1}{2}\, \min\{\mathbb{E}f, \,1-\mathbb{E}f\}.
\end{equation*}
Combining this with Equation (\ref{eq::final_proof}) establishes the statement of the proposition. 
\end{proof}

\section{Training at Scale Experiment Setup}\label{appendix::scale_setup}
\subsection{Architectures}
The 16 DNN models are:
\begin{enumerate}
    \item \textbf{S-R50x1/1}: A BiT ResNet50 model pretrained on ILSRCV2012. 
    \item \textbf{M-R50x1/1}: A BiT ResNet50 model pretrained on ImageNet-21k. 
    \item \textbf{L-R50x1/1}: A BiT ResNet50 model pretrained on JFT-300M. 
    
    \item \textbf{S-R50x3/1}: A BiT ResNet50 model 3x wide, pretrained on ILSRCV2012. 
    \item \textbf{M-R50x3/1}: A BiT ResNet50 model 3x wide, pretrained on ImageNet-21k. 
    \item \textbf{L-R50x3/1}: A BiT ResNet50 model 3x wide, pretrained on JFT-300M. 
    
    \item \textbf{S-R101x1/1}: A BiT ResNet101 model pretrained on ILSRCV2012. 
    \item \textbf{M-R101x1/1}: A BiT ResNet101 model pretrained on ImageNet-21k. 
    \item \textbf{L-R101x1/1}: A BiT ResNet101 model pretrained on JFT-300M. 
    
    \item \textbf{S-R101x3/1}: A BiT ResNet101 model 3x wide pretrained on ILSRCV2012. 
    \item \textbf{M-R101x3/1}: A BiT ResNet101 model 3x wide pretrained on ImageNet-21k. 
    \item \textbf{L-R101x3/1}: A BiT ResNet101 model 3x wide pretrained on JFT-300M. 
    
    \item \textbf{MobileNetV2}: pretrained on ILSRCV2012 \citep{howard2017mobilenets}. 
    \item \textbf{DenseNet121}: pretrained on ILSRCV2012 \citep{huang2017densely}. 
    \item \textbf{DenseNet169}: pretrained on ILSRCV2012 \citep{huang2017densely}. 
    \item \textbf{NASNetMobile}: pretrained on ILSRCV2012 \citep{zoph2018learning}. 
\end{enumerate}
Big Transfer (BiT) models are described in \cite{kolesnikov2019big}.

\subsection{Downstream Tasks}
The downstream classification tasks are all in CelebA \citep{liu2015faceattributes} and COCO datasets \citep{DBLP:journals/corr/LinMBHPRDZ14}. In CelebA, we choose seven attributes that are not immediately related to sex: (1) Smiling, (2) Young, (3) Attractiveness, (4) Narrow Eyes, (5) Oval Face, (6) Pale Skin, and (7) Pointy Nose. We reiterate that we conduct experiments on such vision tasks as a way of validating the technical claims of this paper. Our experiments are not to be interpreted as an endorsement of these visions tasks.

In COCO, we choose the most frequent five objects among images that contain individuals whose sex can be reliably identified from the image caption (see Section \ref{sect::experiments}). The five objects are: Chair (Object ID: 56),  Car (Object ID: 2), Handbag (Object ID: 26), Skateboard (Object ID: 36), Tennis Racket (Object ID: 38).
\end{document}